\documentclass{article}

\usepackage[preprint]{neurips_2026}

\usepackage[utf8]{inputenc}
\usepackage[T1]{fontenc}
\usepackage{hyperref}
\usepackage{url}
\usepackage{booktabs}
\usepackage{amsfonts}
\usepackage{amsmath,amssymb,amsthm}
\usepackage{mathtools}
\usepackage{nicefrac}
\usepackage{microtype}
\usepackage{xcolor}
\usepackage{graphicx}
\usepackage{enumitem}
\usepackage{wrapfig}
\usepackage{subcaption}
\usepackage{float}
\usepackage{fontawesome5}

\graphicspath{{./}}

\newtheorem{theorem}{Theorem}
\newtheorem{proposition}[theorem]{Proposition}

\newtheorem{corollary}[theorem]{Corollary}
\theoremstyle{definition}
\newtheorem{definition}[theorem]{Definition}
\theoremstyle{remark}

\newcommand{\R}{\mathbb{R}}
\newcommand{\E}{\mathbb{E}}
\newcommand{\cN}{\mathcal{N}}
\newcommand{\cL}{\mathcal{L}}
\newcommand{\smin}{\sigma_{\mathrm{min}}}
\newcommand{\neff}{n_{\mathrm{eff}}}
\newcommand{\dmodel}{d_{\mathrm{model}}}
\newcommand{\dk}{d_k}

\title{Support-Conditioned Flow Matching Is Kernel Smoothing}

\author{
  Daniel Matsui Smola \\
  Department of Computer Science \\
  University of Washington \\
  Seattle, WA \\
  \texttt{daniel@matsuismola.com} \\
}

\begin{document}

\maketitle

\vspace{-0.5em}
{\centering\footnotesize
  \href{https://github.com/BaroqueObama/kernel-flow-matching-code}{{\color{black}\faGithub~Code}}
\par\vspace{0.5em}}

\begin{abstract}
Generative models are often conditioned on a small set of examples via cross-attention. Under the Gaussian optimal-transport path, we show that the exact velocity field induced by a finite support set is a Nadaraya--Watson kernel smoother whose bandwidth decreases with flow time, from broad averaging at early steps to nearest-neighbor at late steps. A single Gaussian-kernel attention head exactly computes this field, connecting cross-attention conditioning to classical kernel theory. The theory predicts three failure regimes: nearest-neighbor collapse of the kernel at high dimension, mismatch between the isotropic kernel and the data geometry, and insufficient support for nonparametric estimation. Experiments on Gaussian mixtures, spherical shells, and DINOv2 ImageNet features confirm that learned conditioning improves in precisely these regimes, and that IP-Adapter's cross-attention implements approximate NW smoothing in practice.
\end{abstract}

\section{Introduction}
\label{sec:intro}

Text alone often cannot specify the visual attributes one wants to generate in an image: a particular face, a beloved pet, art style, or object appearance. The standard solution is to condition on reference examples via cross-attention: IP-Adapter \citep{ye2023ipadapter}, Context Diffusion \citep{ivanova2024context}, SuTI \citep{chen2023suti}, and few-shot diffusion models \citep{giannone2022fewshot} all condition the generative process by attending to a set of reference image features.

However, this mechanism is understood only empirically. One may ask: \emph{What is the mathematical structure of the velocity field that a finite reference set induces?} \emph{What governs the success or failure of cross-attention conditioning?}

We answer these questions in the flow matching setting. Given a finite support set of samples from an unknown target distribution, the task is to learn a velocity field that generates fresh samples via ODE integration without parameter updates at test time. We show that the exact velocity field induced by such a support set under the Gaussian optimal-transport path is a Nadaraya--Watson (NW) kernel smoother \citep{nadaraya1964estimating, watson1964smooth}. This amounts to a weighted average of the support points, with Gaussian kernel weights that concentrate more heavily on nearby points as the flow progresses. A single Gaussian-kernel cross-attention head computes this field exactly (Theorem~\ref{thm:realization}), for any support set and any flow time.

Classical kernel smoothing has well-characterized failure modes, and each appears in the conditioning setting:
\begin{enumerate}[nosep]
  \item \textbf{Nearest-neighbor collapse.} At high dimension, concentration of measure collapses the Gaussian kernel weights to a single nearest neighbor. The exact field reduces to nearest-neighbor interpolation, and using it as an inductive bias becomes harmful.
  \item \textbf{Geometry mismatch.} The isotropic kernel smoothes uniformly in all directions. When variation in the data is concentrated along a low-dimensional subspace, the kernel under-smoothes the informative directions while wasting bandwidth on uninformative ones, yielding high-variance velocity estimates independently of support size.
  \item \textbf{Support scarcity.} At small support sizes, nonparametric estimation is data-starved. Learned models improve by amortizing over a meta-distribution, exploiting cross-task structure that single-task kernel estimation cannot access.
\end{enumerate}
We verify each failure mode experimentally on Gaussian mixtures, spherical shells, and DINOv2 ImageNet features, and show that learned conditioning improves precisely where the theory predicts the exact field is deficient. IP-Adapter's attention weights approximate NW kernel weights in practice (\S\ref{sec:adapters}), suggesting the framework applies beyond our controlled setting.

\paragraph{Relation to prior work.}
\citet{tsai2019transformer} and \citet{goel2024can} show that attention \emph{can} implement kernel regression. We show that the OT-FM velocity field \emph{is} a kernel smoother---the attention construction is one implementation of a structure already present in the velocity. \citet{fukumizu2025flow} and \citet{kunkel2025minimax} derive minimax rates from the endpoint KDE connection; \citet{zhou2026discretization} analyzes the empirical velocity and discretization bias. We derive the full-path velocity-field decomposition, of which the endpoint KDE is a special case, and experimentally verify its predicted failure modes.

\paragraph{Contributions.}
\begin{itemize}[nosep]
  \item We derive the NW velocity identity (Proposition~\ref{prop:velocity}).
  \item We prove one cross-attention head computes this field exactly (Theorem~\ref{thm:realization}).
  \item We identify three failure modes from kernel theory and confirm each experimentally (\S\ref{sec:experiments}).
  \item We show that multi-head attention decomposes into $H$ NW smoothers in $\dk$ dimensions (Proposition~\ref{prop:multihead}).
  \item We show IP-Adapter's attention weights approximate NW kernel weights (\S\ref{sec:adapters}).
\end{itemize}

\section{Background and setup}
\label{sec:setup}

Flow matching learns a velocity field $v(x, t)$ that transports samples from a noise distribution ($t = 0$) to a target data distribution ($t = 1$) via ODE integration $\mathrm{d}x/\mathrm{d}t = v(x, t)$. To generate, one draws noise $X_0 \sim \cN(0, I_d)$ and integrates forward to obtain a sample at $t = 1$.

\paragraph{Optimal-transport (OT) flow matching.}
Given a data distribution $q$ on $\R^d$ and base $p_0 = \cN(0, I_d)$, the OT conditional path \citep{lipman2023flow} is
\begin{equation}\label{eq:ot-path}
  X_t = t\,X_1 + \sigma_t\,X_0, \qquad \sigma_t = 1 - (1-\smin)\,t,
\end{equation}
where $X_1 \sim q$, $X_0 \sim \cN(0,I_d)$, and $0 < \smin \le 1$. The conditional flow matching objective regresses the conditional velocity target $Y_t = X_1 - (1-\smin)X_0$.

We define the \emph{de-scaled bandwidth} $h(t) = \sigma_t/t$, which is strictly decreasing on $(0,1]$, sweeping from $\infty$ (at $t \to 0$) to $\smin$ (at $t = 1$). As we will show, this quantity connects flow matching time to kernel density estimation bandwidth.

\paragraph{Nadaraya--Watson kernel regression.}
For bandwidth $h>0$ and support set $S = \{s_i\}_{i=1}^m$, the Gaussian kernel weights and local mean are
\begin{equation}\label{eq:nw-weights}
  w_i(x) = \frac{\phi_h(x - s_i)}{\sum_{j=1}^m \phi_h(x - s_j)}, \qquad m_h(x; S) = \sum_{i=1}^m w_i(x)\,s_i,
\end{equation}
where $\phi_h(z) = (2\pi h^2)^{-d/2}\exp(-\|z\|^2/2h^2)$. The effective sample size $\neff = 1/\sum_i w_i^2$ measures how many support points meaningfully contribute.

\paragraph{Problem: support-conditioned generation.}
Given $S = \{s_i\}_{i=1}^m \overset{\text{iid}}{\sim} q$ from an unknown $q$, the goal is to learn a velocity field $v_\theta(x, t, S)$ that generates fresh samples from $q$ via ODE integration, without parameter updates at test time. We call this \emph{In-Context Flow Matching} (ICFM) and train with the support-conditioned loss:
\begin{equation}\label{eq:icfm-loss}
  \cL(\theta) = \E_{q \sim \Pi}\;\E_{S \sim q^m}\;\E_{X_1 \sim q,\,X_0 \sim \cN(0,I_d),\,t \sim \tau}\bigl[\|v_\theta(X_t, t, S) - Y_t\|^2\bigr],
\end{equation}
where $\Pi$ is a meta-distribution over data distributions and $\tau = \mathrm{Uniform}(0,1]$.

We distinguish three velocity fields. The \emph{population field} $u_t^q$ is the marginal velocity induced by the true $q$. The \emph{empirical plug-in field} $u_t^S$ replaces $q$ by $\hat{q}_S = m^{-1}\sum_i \delta_{s_i}$ and is exactly computable for any support set via a single Gaussian-kernel attention head (\S\ref{sec:realization}). The \emph{learned field} $v_\theta$ is trained over tasks from $\Pi$. The empirical question is whether $v_\theta$ improves over $u_t^S$, and if so, whether the improvement can be explained by identifiable failure modes of the plug-in field.

\section{The NW velocity identity}
\label{sec:theory}

Full proofs are in Appendix~\ref{app:proofs}; we state the key results with proof sketches here.

\subsection{The OT path as a bandwidth sweep}
\label{sec:kde}

We show that the OT probability path through a finite support set is a kernel density estimate (KDE) whose bandwidth decreases with time. The key step is \emph{de-scaling}: dividing $X_t = tX_1 + \sigma_t X_0$ by $t$ gives $\tilde{X}_t \coloneqq X_t/t = X_1 + (\sigma_t/t)\,X_0$. Since $X_0 \sim \cN(0,I_d)$, this is $X_1$ plus Gaussian noise at standard deviation $h(t) = \sigma_t/t$. When the data distribution is the empirical measure $\hat{q}_S = \frac{1}{m}\sum_i \delta_{s_i}$ (uniform point masses at the support points), the de-scaled density is a Gaussian KDE:

\begin{proposition}[De-scaled empirical OT path is a KDE]\label{prop:kde}
Under the Gaussian OT conditional path~\eqref{eq:ot-path}, for every $t \in (0,1]$, the de-scaled empirical density $\tilde{p}_t^S(\tilde{x}) \coloneqq t^d\,p_t^S(t\tilde{x})$ equals
\begin{equation}\label{eq:kde}
  \tilde{p}_t^S(\tilde{x}) = \frac{1}{m}\sum_{i=1}^m \phi_{h(t)}(\tilde{x} - s_i), \qquad h(t) = \frac{\sigma_t}{t}.
\end{equation}
As $t$ traverses $(0,1]$, $h(t)$ sweeps from $\infty$ to $\smin$: the OT path through a support set is a continuous multi-resolution KDE, from maximum smoothing to near-delta peaks at the support points.
\end{proposition}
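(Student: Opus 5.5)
The plan is a direct computation: write down the marginal density of $X_t$ under the empirical measure, then apply the change of variables $x = t\tilde{x}$.

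First, for fixed $t\in(0,1]$, I condition on $X_1$. Given $X_1 = s_i$, the variable $X_t = t s_i + \sigma_t X_0$ is Gaussian with mean $t s_i$ and covariance $\sigma_t^2 I_d$, and $\sigma_t>0$ because $\smin>0$. So its density is
\[
p_t(x\mid s_i) = (2\pi\sigma_t^2)^{-d/2}\exp\bigl(-\|x - t s_i\|^2/2\sigma_t^2\bigr) = \phi_{\sigma_t}(x - t s_i).
\]
Under $\hat{q}_S$, each $s_i$ has mass $1/m$. By the law of total probability, the marginal path is the mixture
\[
p_t^S(x) = \frac{1}{m}\sum_{i=1}^m \phi_{\sigma_t}(x - t s_i).
\]

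Next I substitute $x = t\tilde{x}$. The exponent becomes $\|t\tilde{x} - t s_i\|^2/2\sigma_t^2 = \|\tilde{x}-s_i\|^2/2h(t)^2$ with $h(t)=\sigma_t/t$. The prefactor satisfies
\[
t^d(2\pi\sigma_t^2)^{-d/2} = \bigl(2\pi\sigma_t^2/t^2\bigr)^{-d/2} = (2\pi h(t)^2)^{-d/2}.
\]
Together these give $t^d\phi_{\sigma_t}(t\tilde{x}-ts_i) = \phi_{h(t)}(\tilde{x}-s_i)$, which yields \eqref{eq:kde}. As a consistency check, the factor $t^d$ is exactly the Jacobian of the map $\tilde{x}\mapsto t\tilde{x}$. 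So $\tilde{p}_t^S$ is the density of $\tilde{X}_t = X_t/t$ and integrates to one, matching the de-scaling heuristic $\tilde{X}_t = X_1 + h(t)X_0$.

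Finally, I treat the monotonicity of the bandwidth. Write $h(t) = 1/t - (1-\smin)$. This is strictly decreasing on $(0,1]$, with derivative $-1/t^2<0$. It tends to $\infty$ as $t\to0^+$ and equals $\smin$ at $t=1$, and it is continuous, so it sweeps $[\smin,\infty)$.

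There is no real obstacle here; the computation is routine. The only points needing care are the normalization constant in the change of variables, which is where the $t^d$ in the definition of $\tilde{p}_t^S$ comes from, and the positivity of $\sigma_t$, which is guaranteed by $\smin>0$.
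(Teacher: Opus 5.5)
Your proof is correct and follows the paper's argument. The paper likewise writes $p_t^S$ as the Gaussian mixture $\frac{1}{m}\sum_i \phi_{\sigma_t}(x - ts_i)$ and applies the scaling identity $t^d\phi_{\sigma_t}(tz) = \phi_{\sigma_t/t}(z)$, which you verify explicitly, and your check that $h(t) = 1/t - (1-\smin)$ decreases strictly from $\infty$ to $\smin$ fills in the sweep claim that the paper only asserts.
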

\begin{proof}
Using the Gaussian scaling identity $t^d\,\phi_{\sigma_t}(t\,z) = \phi_{\sigma_t/t}(z)$:
$t^d\,p_t^S(t\tilde{x}) = \frac{1}{m}\sum_i t^d\,\phi_{\sigma_t}(t\tilde{x} - ts_i) = \frac{1}{m}\sum_i \phi_{h(t)}(\tilde{x} - s_i)$.
\end{proof}

\textbf{Implication.} At each flow time $t$, the generative process maintains a KDE of the support set at bandwidth $h(t)$. The ODE generates samples from a continuously sharpening density estimate: broad smoothing at early times, near-delta peaks at the support points as $t \to 1$. The noise schedule $\sigma_t$ therefore determines the bandwidth schedule $h(t) = \sigma_t/t$: for practitioners, designing the noise schedule for support-conditioned generation is equivalent to selecting a time-varying kernel bandwidth, a problem with a rich classical literature. The velocity field that drives this sharpening is characterized next.

\subsection{The exact velocity field is a NW smoother}
\label{sec:velocity}

The marginal velocity field is $u_t(x) = \E[X_1 - (1-\smin)X_0 \mid X_t = x]$. For the Gaussian OT path, the Tweedie identity \citep{efron2011tweedie} connects this to the score, and de-scaling converts the score into NW form.

\begin{proposition}[Exact empirical OT-FM velocity field]\label{prop:velocity}
Under the Gaussian OT conditional path~\eqref{eq:ot-path} with $X_0 \sim \cN(0,I_d)$, the exact empirical velocity field induced by support set $S$ is:
\begin{equation}\label{eq:velocity}
  u_t^S(x) = \tilde{x} + \frac{m_{h(t)}(\tilde{x}; S) - \tilde{x}}{\sigma_t}, \qquad \tilde{x} = x/t,\quad h = h(t),
\end{equation}
where $m_{h(t)}(\tilde{x}; S) = \sum_i w_i(\tilde{x})\,s_i$ is the NW local mean with Gaussian kernel weights at bandwidth $h(t) = \sigma_t/t$.
\end{proposition}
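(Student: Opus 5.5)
We compute the posterior mean of the conditional target directly. Under $\hat q_S$, the joint law of $(X_1, X_t)$ is a finite mixture: $X_1 = s_i$ with probability $1/m$, and then $X_t \mid X_1 = s_i \sim \cN(t s_i, \sigma_t^2 I_d)$. The marginal velocity is $u_t^S(x) = \E[X_1 - (1-\smin)X_0 \mid X_t = x]$. The first step is to eliminate $X_0$ using the path identity $X_0 = (X_t - tX_1)/\sigma_t$. This gives
\[
u_t^S(x) = \E[X_1 \mid X_t = x] - \frac{1-\smin}{\sigma_t}\bigl(x - t\,\E[X_1\mid X_t = x]\bigr),
\]
so everything reduces to computing the posterior mean $\E[X_1 \mid X_t = x]$. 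This route avoids invoking Tweedie and the score explicitly, though it is equivalent to them.

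Next we compute the posterior weights by Bayes' rule:
\[
\Pr(X_1 = s_i \mid X_t = x) = \frac{\phi_{\sigma_t}(x - t s_i)}{\sum_j \phi_{\sigma_t}(x - t s_j)}.
\]
We then apply the scaling identity from the proof of Proposition~\ref{prop:kde}, $t^d\phi_{\sigma_t}(t z) = \phi_{h(t)}(z)$, with $z = \tilde x - s_i$. The common factor $t^d$ cancels between numerator and denominator, so the posterior weights are exactly the NW weights $w_i(\tilde x)$ at bandwidth $h(t)$. Hence $\E[X_1 \mid X_t = x] = m_{h(t)}(\tilde x; S)$.

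The remaining step is algebra, and it is the only place where care is needed. Write $m = m_{h(t)}(\tilde x;S)$ and $x = t\tilde x$. Then
\[
u_t^S(x) = m - \frac{(1-\smin)\,t}{\sigma_t}\,(\tilde x - m).
\]
Using $(1-\smin)t = 1 - \sigma_t$, the coefficient becomes $(1-\sigma_t)/\sigma_t = 1/\sigma_t - 1$. Therefore
\[
u_t^S(x) = m + (\tilde x - m) - \frac{\tilde x - m}{\sigma_t} = \tilde x + \frac{m - \tilde x}{\sigma_t},
\]
which is \eqref{eq:velocity}.

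There are no real obstacles; the only subtle points are bookkeeping. First, the conditional expectation must be taken with respect to the empirical mixture, which justifies treating $u_t^S$ as the marginal velocity of $p_t^S$. Second, the identity holds for every $t\in(0,1]$, where $\sigma_t \ge \smin > 0$ and $t>0$, so both $\tilde x$ and the division by $\sigma_t$ are well defined.
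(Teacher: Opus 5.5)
Your proof is correct, and it reaches the identity by a more direct route than the paper. Both arguments start the same way: they eliminate $X_0$ via the path equation. The paper does this inside Proposition~\ref{prop:app-velscore}, writing $u_t(x) = (\E[X_1\mid X_t=x]-(1-\smin)x)/\sigma_t$.

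From there the routes diverge. The paper rewrites the posterior mean using Tweedie's identity and derives the velocity--score relation $u_t(x) = x/t + (\sigma_t/t)\nabla\log p_t(x)$. It recovers the NW mean only at the end, by differentiating the de-scaled KDE of Proposition~\ref{prop:kde} to get $\nabla\log\tilde p_t^S(\tilde x) = (m_h(\tilde x;S)-\tilde x)/h^2$ and then applying the chain rule. You skip the score entirely and evaluate $\E[X_1\mid X_t=x]$ by Bayes' rule over the finite mixture. The NW weights then appear directly as posterior probabilities, with no differentiation or de-scaling of a score.

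Your route buys three things. It makes explicit that $m_{h(t)}(\tilde x;S)$ is exactly the denoiser $\E[X_1\mid X_t=x]$. It extends verbatim to a general $q$: replacing the sum by an integral gives the population field $u_t^q$ used in Appendix~\ref{app:vel-mse}. It also makes Theorem~\ref{thm:aniso} nearly immediate: with base $\cN(0,M^{-1})$ the posterior weights become Mahalanobis weights, your final algebra is unchanged, and the preserved post-map $A_t$ needs no $M^{-1}$--$M$ cancellation.

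The paper's route, in return, yields the velocity--score identity as a standalone result for arbitrary $q$. It also ties the velocity directly to the multi-resolution KDE picture of Proposition~\ref{prop:kde}, which is the interpretation the rest of the paper builds on.
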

\begin{proof}[Proof sketch]
Applying Tweedie's identity to the signal-plus-noise decomposition $X_t = tX_1 + \sigma_t X_0$ gives $\E[X_1 \mid X_t = x] = x/t + (\sigma_t^2/t)\,\nabla\log p_t(x)$. Substituting into the marginal velocity and simplifying yields the velocity--score relation:
\begin{equation}\label{eq:vel-score}
  u_t(x) = \frac{x}{t} + \frac{\sigma_t}{t}\,\nabla\log p_t(x).
\end{equation}
Differentiating the KDE from Proposition~\ref{prop:kde} gives the score in NW form: $\nabla\log\tilde{p}_t^S(\tilde{x}) = (m_h(\tilde{x};S) - \tilde{x})/h^2$. Evaluating~\eqref{eq:vel-score} at $x = t\tilde{x}$ and substituting yields:
\[
  u_t^S(t\tilde{x}) = \tilde{x} + \frac{\sigma_t}{t^2}\cdot\frac{m_h(\tilde{x};S) - \tilde{x}}{h(t)^2}.
\]
Since $h(t) = \sigma_t/t$, we have $\sigma_t/(t^2 h^2) = 1/\sigma_t$, yielding~\eqref{eq:velocity}.
\end{proof}

\textbf{Implication.} The velocity in~\eqref{eq:velocity} consists of an inertia term $\tilde{x}$ and a correction $(m_h - \tilde{x})/\sigma_t$ that steers toward the NW mean of nearby support points. As $t \to 1$ and $\sigma_t \to \smin$, this correction dominates: at late flow times, the field pulls samples strongly toward the kernel-weighted average of their neighbors. The exact velocity is therefore entirely determined by the NW estimator $m_h$. If the kernel weights degenerate (e.g., collapsing to a single neighbor) or the kernel shape mismatches the data geometry, the velocity field inherits these defects---it steers samples incorrectly, degrading generation quality. We characterize these failures in \S\ref{sec:experiments}.

\subsection{Attention realization}
\label{sec:realization}

Since the NW local mean is a weighted average with softmax-normalized weights, it can be computed by a cross-attention head with appropriately chosen logits:

\begin{theorem}[Attention realization of the empirical OT-FM field]\label{thm:realization}
Under the Gaussian OT conditional path~\eqref{eq:ot-path}, consider a single cross-attention head with query $\tilde{x}$, keys and values $\{s_i\}_{i=1}^m$, and Gaussian-kernel logits $\ell_i = -\|\tilde{x} - s_i\|^2/(2h(t)^2)$. Define the affine post-map $A_t(\tilde{x}, z) = \tilde{x} + (z - \tilde{x})/\sigma_t$. Then
\begin{equation}\label{eq:realization}
  A_t\!\bigl(\tilde{x},\;\mathrm{Attn}(\tilde{x}, t, S)\bigr) = u_t^S(t\,\tilde{x}).
\end{equation}
That is, one Gaussian-kernel cross-attention head followed by an affine map exactly implements the marginal velocity field induced by $S$.
\end{theorem}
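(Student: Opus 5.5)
The argument reduces Theorem~\ref{thm:realization} to Proposition~\ref{prop:velocity}. The only real content is that softmax attention with Gaussian-kernel logits returns exactly the Nadaraya--Watson local mean $m_{h(t)}(\tilde{x};S)$.

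\textbf{Step 1: the attention output equals the NW mean.} Write $\mathrm{Attn}(\tilde{x},t,S) = \sum_i \alpha_i s_i$, where $\alpha_i = e^{\ell_i}/\sum_j e^{\ell_j}$, since the values are the $s_i$. The Gaussian kernel factors as $\phi_h(\tilde{x}-s_i) = (2\pi h^2)^{-d/2}\exp(\ell_i)$ with $h=h(t)$. The prefactor $(2\pi h^2)^{-d/2}$ does not depend on $i$, so it cancels between numerator and denominator in \eqref{eq:nw-weights}. Hence $\alpha_i = w_i(\tilde{x})$ for every $i$, and
\[
\mathrm{Attn}(\tilde{x},t,S)=m_{h(t)}(\tilde{x};S).
\]
This holds for every $t\in(0,1]$, because $h(t)=\sigma_t/t$ is finite and positive there. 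The condition $\smin>0$ gives $\sigma_t\ge\smin>0$, so $h(t)>0$ and $1/\sigma_t$ is well defined.

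\textbf{Step 2: apply the affine post-map.} Substituting into $A_t$ gives
\[
A_t(\tilde{x},\mathrm{Attn}) = \tilde{x} + \frac{m_{h(t)}(\tilde{x};S)-\tilde{x}}{\sigma_t}.
\]
This is the right-hand side of \eqref{eq:velocity} evaluated at $x=t\tilde{x}$, that is, with $\tilde{x}=x/t$. Proposition~\ref{prop:velocity} identifies that expression as $u_t^S(t\tilde{x})$, which proves \eqref{eq:realization}.

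\textbf{Main obstacle.} There is no real obstacle. The only care needed is to confirm that the normalizing constant of $\phi_h$ is independent of $i$ and so drops out of the softmax. One should also note that a shift-invariant rewriting is available if desired: $-\|\tilde{x}-s_i\|^2/2h^2 = (\tilde{x}^\top s_i - \tfrac12\|s_i\|^2)/h^2 + \text{const}$. This shows the logits can be realized as a dot-product score plus a key-dependent bias, but it is not needed for the statement as posed.
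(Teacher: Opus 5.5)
Your proposal is correct and follows essentially the same route as the paper: the paper also shows that the prefactor $(2\pi h^2)^{-d/2}$ cancels in the softmax, so $\alpha_i = w_i(\tilde{x})$ and $\mathrm{Attn} = m_h(\tilde{x};S)$, and then applies $A_t$ and invokes Proposition~\ref{prop:velocity}. Your side remark rewriting the logits as a dot product plus a key-dependent bias is a lighter version of the paper's Corollary~\ref{cor:dotproduct}, which lifts the logits to an exact dot product in $\R^{d+2}$.
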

\begin{proof}[Proof sketch]
The softmax weights with logits $\ell_i = -\|\tilde{x} - s_i\|^2/(2h^2)$ satisfy $\alpha_i = \exp(\ell_i)/\sum_j\exp(\ell_j)$. Since $\phi_h(z) \propto \exp(-\|z\|^2/2h^2)$, the normalization constant $(2\pi h^2)^{-d/2}$ cancels between numerator and denominator: $\alpha_i = \phi_h(\tilde{x} - s_i)/\sum_j \phi_h(\tilde{x} - s_j) = w_i(\tilde{x})$, the NW weights. So $\mathrm{Attn}(\tilde{x},t,S) = \sum_i w_i\,s_i = m_h(\tilde{x};S)$, and $A_t(\tilde{x}, m_h) = \tilde{x} + (m_h - \tilde{x})/\sigma_t = u_t^S(t\tilde{x})$ by Proposition~\ref{prop:velocity}.
\end{proof}

\textbf{Implication.} The exact NW field can be embedded as a frozen, zero-parameter attention head inside any learned model, providing a built-in nonparametric baseline. This gives a concrete experimental tool: if the exact head helps, the NW signal is useful at that operating point; if it makes no difference, the signal is uninformative and the model learns to ignore it. We use this probe in \S\ref{sec:collapse}.

\subsection{Multi-head NW decomposition}
\label{sec:multihead-theory}

In practice, the learned model uses multiple cross-attention heads with trained projection matrices alongside the frozen exact head. The NW framework extends naturally to this setting:

\begin{proposition}[Multi-head attention as NW ensemble]\label{prop:multihead}
Multi-head cross-attention with $H$ heads computes $\mathrm{MHA}(q, Z) = \sum_{h=1}^H W_O^h\,m_{K_h}\!\bigl(q;\,\{(z_i,\,W_V^h z_i)\}\bigr)$, where each $m_{K_h}$ is a generalized NW estimator with kernel $K_h(x,s) = \exp(x^\top W_Q^{h\top} W_K^h s / \sqrt{\dk})$. The logit bilinear form has rank $\le \dk = \dmodel/H$, so each head's weights depend on at most $\dk$ independent linear functionals of the query and keys.
\end{proposition}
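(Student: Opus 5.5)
The plan is to unfold the definition of multi-head cross-attention and regroup it term by term. I take the standard form
\[
\mathrm{MHA}(q,Z)=W_O\,[\mathrm{head}_1;\dots;\mathrm{head}_H],\qquad \mathrm{head}_h=\sum_{i} \alpha^h_i\, W_V^h z_i ,
\]
with $\alpha^h_i=\mathrm{softmax}_i\bigl((W_Q^h q)^\top (W_K^h z_i)/\sqrt{\dk}\bigr)$. The first step is to split the output projection column-blockwise as $W_O=[W_O^1,\dots,W_O^H]$, with $W_O^h\in\R^{\dmodel\times \dk}$. By block matrix multiplication this gives $\mathrm{MHA}(q,Z)=\sum_h W_O^h\,\mathrm{head}_h$, which is the outer sum in the statement.

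The second step identifies each head as a generalized NW estimator. Set $K_h(x,s)=\exp(x^\top W_Q^{h\top}W_K^h s/\sqrt{\dk})$. The softmax weights are then exactly $\alpha^h_i=K_h(q,z_i)/\sum_j K_h(q,z_j)$. This is the same normalization-cancellation argument used in the proof of Theorem~\ref{thm:realization}, but without needing $K_h$ to be a density: positivity of the exponential is enough to make the weights well defined. Hence $\mathrm{head}_h=\sum_i \alpha^h_i\,(W_V^h z_i)$ is the NW local mean with kernel $K_h$, evaluated at query $q$, over the input--response pairs $\{(z_i, W_V^h z_i)\}$. In other words, the weights are computed from the keys $z_i$, and the averaged responses are the projected values. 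I will note explicitly that the NW definition~\eqref{eq:nw-weights} is being generalized to a non-translation-invariant kernel and to responses that differ from the inputs.

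The third step is the rank claim. The logit is $q^\top M_h z$ with $M_h=W_Q^{h\top}W_K^h$, where $W_Q^h,W_K^h\in\R^{\dk\times\dmodel}$. Therefore $\operatorname{rank} M_h\le \dk=\dmodel/H$. To get the ``independent linear functionals'' phrasing, I would write $q^\top M_h z=\sum_{r=1}^{\dk}(W_Q^h q)_r (W_K^h z)_r$. The logit, and hence every weight $\alpha^h_i$, thus depends on $q$ only through the $\dk$ linear functionals $W_Q^h q$, and on each key only through $W_K^h z_i$. Equivalently, each head is an NW smoother acting in a $\dk$-dimensional projected space.

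None of these steps is difficult; the main obstacle is definitional. The weights use $W_K^h z_i$ while the averaged responses use $W_V^h z_i$, so the plan must fix a precise meaning of ``generalized NW estimator'' that allows distinct key and response maps and a general positive kernel. I would state that convention first. I would also remark that the bilinear kernel coincides with a Gaussian kernel up to query-only and key-only factors only in special cases, such as equal-norm keys, in which case it reduces to the Gaussian kernel of Theorem~\ref{thm:realization} in projected coordinates. This remark is not needed for the proposition itself.
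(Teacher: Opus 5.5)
Your proposal is correct and matches the paper's proof step for step. You split $W_O$ into column blocks to get $\sum_h W_O^h\,\mathrm{Head}_h$, and you read the softmax weights as $K_h(q,z_i)/\sum_j K_h(q,z_j)$, so each head equals $m_{K_h}(q;\{(z_i,W_V^h z_i)\})$. You then bound $\mathrm{rank}(W_Q^{h\top}W_K^h)\le \dk$ because both factors have $\dk$ rows.

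The definitional issue you flag is settled in the paper by Definition~\ref{def:gen-nw}, which allows any positive kernel and responses $y_i$ distinct from the kernel inputs $x_i$.
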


The proof follows directly from the definition of multi-head attention (Appendix~\ref{app:multihead}).

\textbf{Implication.} Each learned head computes a NW-type velocity correction in a $\dk$-dimensional subspace. Since the isotropic kernel degrades with ambient dimension (as we will show in \S\ref{sec:collapse}), projecting into lower-dimensional subspaces allows each head to maintain effective smoothing where the full-dimensional kernel would collapse. The learned component of the model is therefore an ensemble of $H$ adaptive smoothers, each avoiding the full curse of dimensionality by operating in a learned $\dk$-dimensional projection of the support set. Formal convergence rates and an anisotropic extension are given in Appendices~\ref{app:multihead} and~\ref{app:aniso}.

\section{Failure modes and what learning corrects}
\label{sec:experiments}

The NW identity from \S\ref{sec:theory} establishes that cross-attention conditioning implements a principled nonparametric estimator. However, the specific isotropic Gaussian kernel underlying the plug-in field has well-known limitations. We now test three predictions from classical NW theory and show what the learned model corrects in each case.

\subsection{Setup}
\label{sec:exp-setup}

All experiments train the ICFM model from~\eqref{eq:icfm-loss}. The velocity network $v_\theta(x, t, S)$ is a transformer ($\dmodel=128$, 3 layers) that conditions on the support set via $H=4$ learned cross-attention heads with trainable $W_Q, W_K, W_V$ projections (the generalized NW smoothers from Proposition~\ref{prop:multihead}). Separately, an optional \textbf{frozen exact head} computes the plug-in velocity from Theorem~\ref{thm:realization} using Gaussian-kernel attention and feeds it as an additional signal into the output MLP. The key experimental manipulation is enabling or disabling this head:
\begin{itemize}[nosep]
  \item \textbf{Plug-in:} the exact NW velocity field integrated alone (no learned parameters).
  \item \textbf{Learned (OFF):} learned heads only. This is closest to what practical systems (IP-Adapter, Context Diffusion) do.
  \item \textbf{Learned (ON):} learned heads + frozen exact head. Our experimental probe, testing whether explicitly providing the NW signal as an inductive bias helps.
\end{itemize}
If ON outperforms OFF, the NW velocity is a useful inductive bias. If ON matches OFF, the signal is uninformative. Sample quality is measured by MMD$^2$ (unbiased Gaussian-kernel U-statistic) and C2ST on 100 held-out tasks ($\ge\!4$ seeds per condition). Full training details are in Appendix~\ref{app:training}.

\subsection{Collapse: the isotropic kernel becomes nearest-neighbor}
\label{sec:collapse}

The isotropic Gaussian kernel in Proposition~\ref{prop:velocity} is subject to the curse of dimensionality. Concentration of measure makes pairwise distance contrasts vanish as $O(1/\sqrt{d})$ \citep{beyer1999when}, and the exponential kernel amplifies even small relative differences into overwhelming weight ratios \citep{bengio2005curse}. The NW weights should therefore collapse to a single nearest neighbor at high $d$.

To see why this degrades the velocity, recall that the plug-in error is $u_t^S - u_t^q = (m_h^S - m_h^q)/\sigma_t$ (from Proposition~\ref{prop:velocity}). The NW variance scales as $O(1/(m\,h^d\,f(\tilde{x})))$ \citep{tsybakov2009introduction}, and dividing by $\sigma_t^2$ gives a velocity MSE of order $O(t^d/(m\,\sigma_t^{d+2}\,f(\tilde{x})))$, which grows exponentially with $d$ (proof in Appendix~\ref{app:vel-mse}). At high $d$, the kernel concentrates on a single neighbor ($\neff \to 1$, where $\neff$ measures how many support points contribute) and the velocity becomes a volatile nearest-neighbor lookup. This is the memorization phenomenon documented in trained diffusion models \citep{lyu2025memorization}. For practitioners, the $\neff$ quantity provides a closed-form, per-timestep diagnostic of this effect without requiring model training. With the kernel collapsed, the signal from the exact head becomes uninformative, and we expect the ON condition to converge toward OFF as $d$ increases.

\textbf{Experiment.} We train on Gaussian mixtures at $d = 2, 4, 8, 16$ with the frozen exact head ON vs.\ OFF ($\ge\!4$ seeds per condition).

\textbf{Results.} Figure~\ref{fig:transition} (left) shows the exact head benefit declining monotonically from $+28\%$ at $d=2$ to $0\%$ at $d=16$, confirming the prediction: the NW signal becomes uninformative as dimension increases. Figure~\ref{fig:transition} (center) confirms the mechanism: $\neff$ drops from ${\sim}9$ at $d=2$ to ${\sim}1$ at $d=16$, consistent with the kernel collapsing to nearest-neighbor.

\textbf{What learning corrects.}\label{sec:multihead} The learned heads avoid collapse by projecting into $\dk$-dimensional subspaces (Proposition~\ref{prop:multihead}), where kernel weights remain well-distributed. Figure~\ref{fig:transition}(c) verifies this: varying head count $H$ at $d=8$ on Gaussian mixtures (exact head OFF, $\dk = \dmodel/H$) and fitting MMD$^2 \sim m^{-\alpha}$, the scaling exponent increases monotonically with $H$ ($\rho = 1.0$, $p = 0.008$). More heads---each operating in a lower-dimensional subspace---yield faster improvement with support size, consistent with each head avoiding the full-dimensional curse.

\begin{figure}[ht!]
  \centering
  \includegraphics[width=\linewidth]{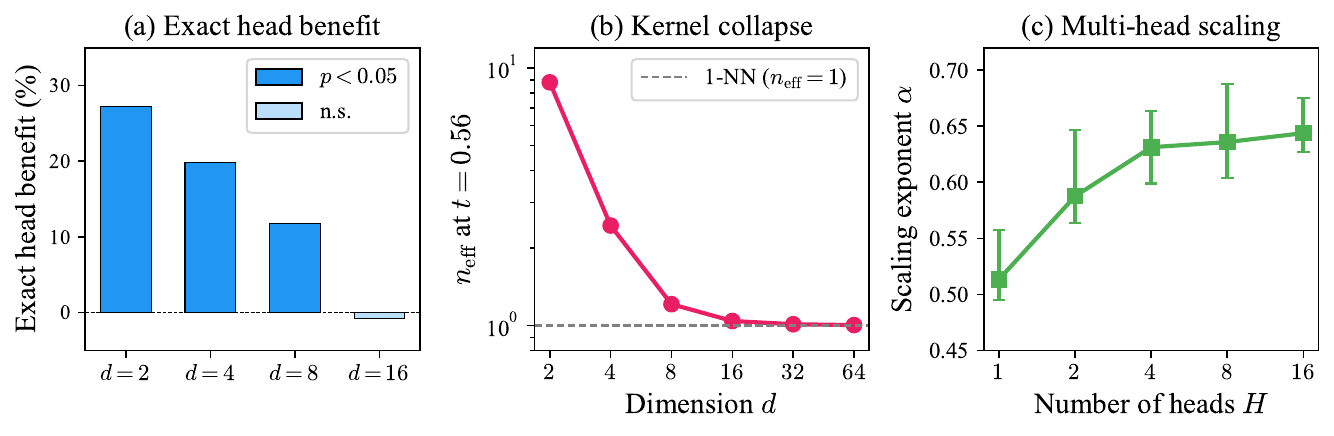}
  \vspace{-18pt}
  \caption{\textbf{(a)} Exact head benefit (\%) vs.\ dimension ($\ge\!4$ seeds): declines from $+28\%$ at $d=2$ to $0\%$ at $d=16$. \textbf{(b)} $\neff$ at mid-flow ($t=0.56$) vs.\ dimension: the kernel collapses to 1-NN over the same range. \textbf{(c)} Multi-head scaling on $\R^8$ Gaussian mixtures: the exponent $\alpha$ (from MMD$^2 \sim m^{-\alpha}$) increases with head count $H$; more heads (each in lower $\dk = \dmodel/H$ dimensions) yield faster rates.}
  \label{fig:transition}
\end{figure}

\subsection{Geometry mismatch: the isotropic kernel wastes bandwidth}
\label{sec:mismatch}

The isotropic kernel can also fail when its shape does not match the data geometry, independently of whether $\neff$ is healthy. An isotropic bandwidth smoothes equally in all directions; when variation in the data is concentrated along a low-dimensional subspace, this wastes capacity on uninformative directions while under-smoothing informative ones \citep{lepski1997optimal}. On distributions with anisotropic structure, the learned model (which can adapt its kernel via $W_Q, W_K$ projections; Proposition~\ref{prop:multihead}) should dramatically outperform the plug-in, even when the plug-in's kernel weights are well-distributed.

\textbf{Experiment.} We test on spherical shells (distributions where all variation is radial, embedded in $\R^d$) at $d = 16$, comparing the learned model (OFF) to the plug-in. To test whether geometry mismatch is independent of $\neff$ collapse, we also evaluate on DINOv2+PCA ImageNet features \citep{oquab2024dinov2} (pre-computed DINOv2 ViT-S/14 class features projected via PCA) and a whitened variant of the same features where whitening makes the covariance isotropic, artificially collapsing $\neff$. If $\neff$ collapse alone were sufficient to make the plug-in fail, whitening should increase the plug-in's MMD$^2$ relative to the learned model's. If geometry mismatch is the independent factor, the ratio should remain near $1$.

\textbf{Results.} Table~\ref{tab:failure-modes} summarizes. On shells, $\neff$ is healthy ($7.1$) yet the plug-in is $11\times$ worse than the learned model: the isotropic kernel smoothes radially and angularly alike, but shells vary only radially. The paired ImageNet control confirms that $\neff$ collapse is not sufficient to cause plug-in failure: whitening collapses $\neff$ from $13.4$ to $1.1$, yet the ratio does not increase ($0.99 \to 0.74$). The plug-in remains effective because nearest neighbors on these features are still same-class members regardless of whitening.

\textbf{What learning corrects.} Figure~\ref{fig:corrects}(a) shows how the learned model overcomes geometry mismatch. On spherical shells at $d=8$, the isotropic NW kernel collapses from $\neff \approx 44$ at $t=0.1$ to $\neff \approx 1$ at $t=0.9$. The 4 learned heads resist this collapse: at late flow times, individual heads range from $\neff \approx 6$ to $\neff \approx 48$, indicating different heads learn different effective bandwidths. Moreover, the cosine similarity between learned and NW attention weights drops from $0.91$ at $t=0.1$ to $0.13$ at $t=0.9$: at late timesteps where isotropic NW fails, the learned model adopts a fundamentally different weighting scheme rather than perpetuating the failure. By maintaining multi-neighbor smoothing and specializing across heads, the learned model avoids the volatile nearest-neighbor velocities that degrade the plug-in.

\begin{table}[h!]
\caption{Three distributions at $d=16$. The ratio is plug-in MMD$^2$ / learned MMD$^2$ (smaller MMD$^2$ is better, so ratio $\gg 1$ means the plug-in fails). Shells have healthy $\neff$ yet the plug-in is $11\times$ worse, demonstrating geometry mismatch as an independent failure mode.}
\label{tab:failure-modes}
\centering
\small
\begin{tabular}{lccrl}
\toprule
Distribution & $d$ & $\neff$ & Plug-in/learned & Interpretation \\
\midrule
Shells & 16 & 7.1 & $\mathbf{10.9\times}$ & Healthy $\neff$ but wrong kernel shape \\
ImageNet & 16 & 13.4 & $0.99\times$ & Plug-in competitive \\
Whitened ImageNet & 16 & 1.1 & $0.74\times$ & $\neff$ collapsed $12\times$; plug-in still competitive \\
\bottomrule
\end{tabular}
\end{table}

\subsection{Support scarcity: when the plug-in is data-poor}
\label{sec:learning}

The plug-in field has an intrinsic quality ceiling. By Proposition~\ref{prop:kde} at $t = 1$, its ODE flow produces samples distributed as $\hat{q}_S * \phi_{\smin}$. This is a Gaussian KDE at bandwidth $\smin$; at small $m$, it is data-poor regardless of kernel shape or dimensionality.

\textbf{Experiment.} We use the DINOv2+PCA ImageNet features introduced in \S\ref{sec:mismatch} at $d=16$, because neither collapse nor geometry mismatch is active at this dimension. This isolates support size as the remaining variable. We train separate models at each support size $m \in \{1, 5, 10, 25, 50\}$, comparing the learned model (OFF) to the plug-in.

\begin{figure}[t]
  \centering
  \includegraphics[width=\linewidth]{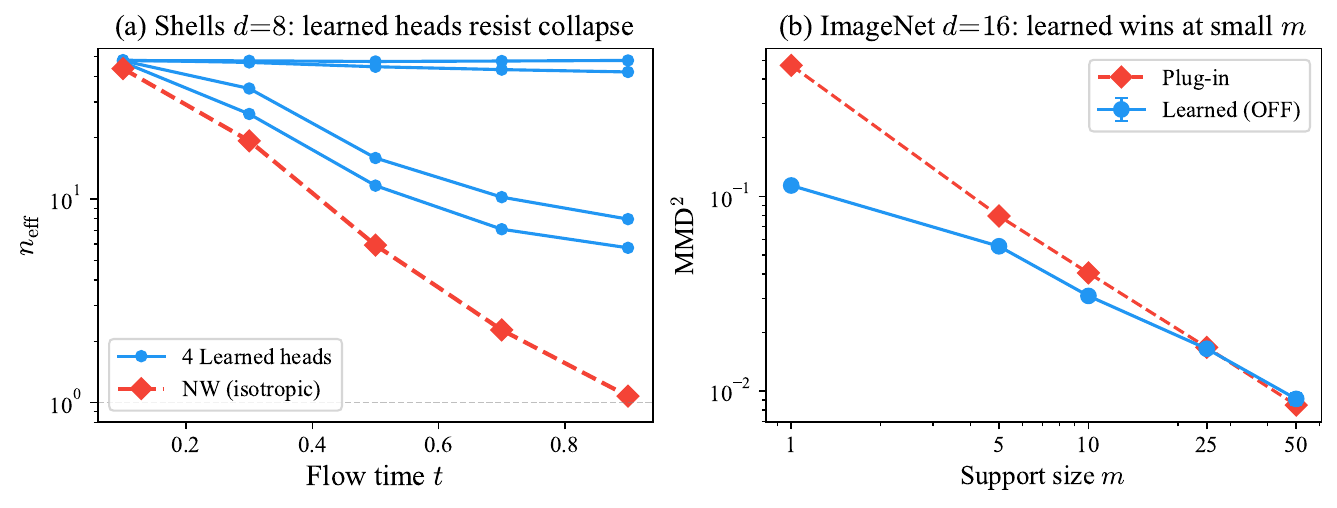}
  \vspace{-14pt}
  \caption{\textbf{(a)} On spherical shells at $d=8$, the isotropic NW kernel collapses to 1-NN at late flow times, while learned heads maintain multi-neighbor smoothing with differentiated bandwidths. \textbf{(b)} On DINOv2+PCA ImageNet features ($d=16$), the learned model outperforms the plug-in at small $m$; the plug-in catches up as $m$ grows. Crossover near $m \approx 25$.}
  \label{fig:corrects}
\end{figure}

\textbf{Results.} Figure~\ref{fig:corrects}(b) shows the learned and plug-in curves crossing near $m \approx 25$. Before this crossover, the learned model outperforms the plug-in: at $m = 10$, it achieves $24\%$ lower MMD$^2$ ($4$ seeds). After the crossover, the plug-in's KDE has enough support points and the learned model offers no additional benefit. The small-support regime ($m \le 10$) is where reference-based adapters operate, typically conditioning on $1$--$4$ reference images.

\textbf{What learning corrects.} The plug-in velocity is the Bayes-optimal predictor for a single task under the empirical measure $\hat{q}_S$ (Appendix~\ref{app:bayes}), but it cannot use information from other tasks. The ICFM loss~\eqref{eq:icfm-loss} trains over many tasks drawn from $\Pi$; the unconstrained minimizer is the posterior predictive velocity $v^*(x,t,S) = \E[Y_t \mid X_t = x, t, S]$, which integrates over all distributions consistent with $S$ under the meta-distribution. When $m$ is small and many distributions are consistent with $S$, this posterior predictive is substantially better than the empirical plug-in because it exploits shared structure across tasks \citep{meunier2023nonlinear}. This meta-learning advantage is what the learned model provides and what single-task kernel estimation cannot access.

\subsection{Controls}
\label{sec:controls}

Replacing the Euler-100 ODE solver with adaptive dopri5 for plug-in integration improves MMD$^2$ by only $2.4\%$ on $\R^2$ Gaussian mixtures, ruling out solver error as the source of plug-in failure. Doubling model capacity ($\dmodel = 256$ vs.\ $128$) changes MMD$^2$ by less than $5\%$ on $\R^4$ Gaussian mixtures, so the dimensionality transition in \S\ref{sec:collapse} is not a capacity bottleneck. Sampling training times $t$ from a log-uniform distribution (which overweights small bandwidths) rather than the uniform distribution used throughout gives no measurable gain ($p = 0.67$, four seeds).

\section{Extension to practical conditioning}
\label{sec:adapters}

The theory above is derived for Gaussian-kernel attention under the Gaussian OT path. Do practical conditioning systems, which use scaled dot-product attention and different noise schedules, exhibit similar NW structure?

\textbf{Experiment.} We test on IP-Adapter \citep{ye2023ipadapter}, a widely used reference-conditioning module for Stable Diffusion 1.5 that conditions on $m = 4$ reference image tokens via cross-attention. IP-Adapter uses a VP diffusion schedule rather than the Gaussian OT path; we index its noise level by the diffusion timestep $t$ (large $t$ = noisy, small $t$ = clean), which plays an analogous role to flow time in our framework. For each attention head at each denoising timestep, we compare the learned SDPA attention weights (dot-product based) to the Gaussian NW kernel weights computed from the same query and key vectors (distance based). We measure Spearman $\rho$ and Pearson $r$ between the two weight vectors across 50 images, 16 layers $\times$ 8 heads, and 5 denoising timesteps.

\textbf{Results.} Figure~\ref{fig:ipadapter}(a) shows that both Spearman $\rho$ and Pearson $r$ increase monotonically with denoising, reaching $\rho = 0.89$ and $r = 0.91$ at the final timestep. This trend is consistent with the bandwidth schedule from Proposition~\ref{prop:kde}: as $h(t) \to \smin$, the NW weights sharpen and the kernel-smoothing component dominates the velocity. Figure~\ref{fig:ipadapter}(b) shows that $56\%$ of heads exceed $\rho = 0.9$ at the final step, up from $22\%$ at the noisiest step.

\textbf{Interpretation.} IP-Adapter uses scaled dot-product attention (not Gaussian-kernel) and a VP noise schedule (not Gaussian OT), so Theorem~\ref{thm:realization} does not formally apply. Nevertheless, the strong monotonic correlation suggests that the NW structure persists approximately under these architectural departures, and that the failure-mode framework of \S\ref{sec:experiments} may extend to practical reference-conditioned generation.

\begin{figure}[ht!]
  \centering
  \includegraphics[width=\linewidth]{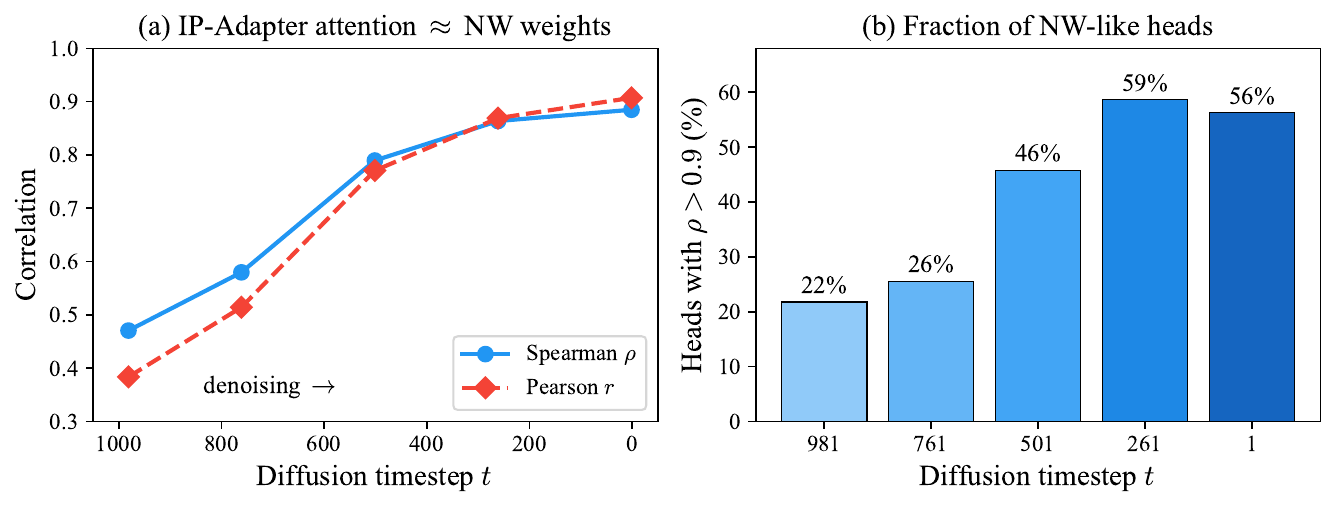}
  \vspace{-14pt}
  \caption{\textbf{(a)} Correlation between IP-Adapter attention weights and Gaussian NW kernel weights, measured across denoising. Both Spearman $\rho$ and Pearson $r$ increase monotonically, reaching $\rho = 0.89$ at the final step. \textbf{(b)} Fraction of attention heads with $\rho > 0.9$; over half of heads are NW-like by the end of denoising.}
  \label{fig:ipadapter}
\end{figure}

\section{Related work}
\label{sec:related}

\paragraph{Attention as kernel smoothing.}
\citet{tsai2019transformer} interpret attention through kernel smoothing; \citet{katharopoulos2020transformers} and \citet{choromanski2021rethinking} develop kernelized attention variants; \citet{goel2024can} show a transformer can represent a Kalman filter, constructing an attention-based kernel regression in the process. Our result is different in kind: the OT-FM velocity field \emph{is} a NW smoother; the attention construction is one implementation of a structure already present in the empirical velocity.

\paragraph{Flow matching theory.}
\citet{lipman2023flow} and \citet{albergo2023stochastic} establish the flow matching framework. \citet{fukumizu2025flow} and \citet{kunkel2025minimax} derive minimax rates via the endpoint KDE connection; \citet{zhou2026discretization} analyzes the empirical velocity and discretization bias; \citet{liu2025oracle} identify oracle-velocity regimes at late timesteps. We derive the full-path bandwidth family $h(t) = \sigma_t/t$ and the velocity-field NW decomposition, of which the endpoint KDE is a special case (\S\ref{sec:kde}).

\paragraph{Nonparametric estimation theory.}
Each failure mode in \S\ref{sec:experiments} has classical roots. The 1-NN collapse follows from concentration of measure \citep{beyer1999when} and exponential kernel amplification \citep{bengio2005curse}. The geometry-mismatch failure connects to anisotropic minimax theory \citep{lepski1997optimal}. The support-scarcity crossover reflects the gap between single-task NW rates \citep{stone1982optimal, tsybakov2009introduction} and meta-learning rates that exploit cross-task structure \citep{meunier2023nonlinear}. The 1-NN collapse also connects to recent memorization analyses in diffusion models: \citet{lyu2025memorization} show that empirical score models behave as nearest-neighbor maps in the low-noise regime; $\neff$ gives a closed-form version of this phenomenon for support-conditioned flows.

\paragraph{Support-conditioned generation.}
FMNP \citep{abuhamad2025fmnp} is closest in problem setting but models conditional function laws, not density transport. DiScoFormer \citep{ilin2026discoformer} (concurrent) estimates densities and scores directly from context using transformers; our work is complementary, characterizing the velocity-field structure that cross-attention induces rather than training a new estimator. Few-shot diffusion \citep{giannone2022fewshot}, Context Diffusion \citep{ivanova2024context}, SuTI \citep{chen2023suti}, and IP-Adapter \citep{ye2023ipadapter} condition practical generative models on reference examples via cross-attention; we connect their conditioning mechanism to NW theory and verify the connection empirically for IP-Adapter (\S\ref{sec:adapters}).

\section{Discussion and conclusion}
\label{sec:discussion}

\paragraph{Summary.}
The NW identity answers both sides of the question posed in \S\ref{sec:intro}. Cross-attention conditioning works because it implements kernel smoothing---a consistent nonparametric estimator whose quality improves with support size. On well-structured features, the plug-in field is already near-optimal. But the identity also reveals when conditioning fails: 1-NN collapse in high dimensions, geometry mismatch on anisotropic data, and support scarcity at small $m$. In each case, the learned model overcomes the failure---by maintaining multi-neighbor smoothing through adaptive projections (\S\ref{sec:mismatch}), and by amortizing over the meta-distribution to compensate for limited support (\S\ref{sec:learning}). The NW structure persists approximately in IP-Adapter (\S\ref{sec:adapters}), suggesting the framework extends to practical systems. For reference-conditioned image generation systems that condition on only a few reference images, the support scarcity regime is most relevant: the meta-learning advantage is largest at small $m$.

\paragraph{The role of $\Pi$.}
The learned model's advantage depends on the meta-distribution $\Pi$. To test this, we train on three of the four $\R^2$ families and evaluate on the held-out family (Appendix~\ref{app:support}). When the held-out family shares geometric structure with the training families (e.g., spirals held out from GMMs, rings, and moons), the learned model transfers well ($2.3\times$ better than the plug-in). When the held-out family is structurally dissimilar (e.g., cluster-based GMMs held out from curve-based families), transfer fails catastrophically ($37\times$ worse). The three curve-based families (rings, moons, spirals) generalize to each other, but cluster-based GMMs do not transfer to or from curves (Figure~\ref{fig:app-lofo}). One possible explanation is shared manifold structure: families whose supports lie on topologically similar low-dimensional manifolds may provide useful priors for each other, while families with fundamentally different support geometry (clusters vs.\ curves) do not. Formalizing this hypothesis and characterizing which properties of $\Pi$ enable amortization remain open problems.

\paragraph{Scope and extensions.}
The formal results assume Gaussian-kernel attention and the Gaussian OT path. Extending to VP paths is natural: any linear Gaussian path $X_t = \mu_t X_1 + \sigma_t X_0$ produces a Gaussian convolution of the scaled data distribution, so a NW-like velocity decomposition may exist with bandwidth $h(t) = \sigma_t/\mu_t$. Formalizing the connection to SDPA, which we demonstrate empirically in \S\ref{sec:adapters}, is another natural direction. Two open problems remain: providing a formal bound on the learned model's improvement over the plug-in, and determining the optimal noise schedule for finite support sets, balancing over-smoothing at early times against 1-NN collapse at late times.


{
\small
\bibliographystyle{plainnat}

\begin{thebibliography}{30}

\bibitem[Albergo \& Vanden-Eijnden(2023)]{albergo2023stochastic}
Albergo, M.~S. \& Vanden-Eijnden, E.
\newblock Building normalizing flows with stochastic interpolants.
\newblock In \emph{ICLR}, 2023.

\bibitem[Abu~Hamad \& Rosenbaum(2025)]{abuhamad2025fmnp}
Abu~Hamad, H. \& Rosenbaum, D.
\newblock Flow matching neural processes.
\newblock In \emph{NeurIPS}, 2025.

\bibitem[Kunkel \& Trabs(2025)]{kunkel2025minimax}
Kunkel, L. \& Trabs, M.
\newblock On the minimax optimality of flow matching through the connection to kernel density estimation.
\newblock \emph{arXiv preprint arXiv:2504.13336}, 2025.

\bibitem[Zhou et~al.(2026)]{zhou2026discretization}
Zhou, Z., Zhang, Z., \& Amini, A.~A.
\newblock Flow matching generalizes through discretization bias.
\newblock Submitted to \emph{ICLR}, 2026.

\bibitem[Bengio et~al.(2005)]{bengio2005curse}
Bengio, Y., Delalleau, O., \& {Le Roux}, N.
\newblock The curse of highly variable functions for local kernel machines.
\newblock In \emph{NeurIPS}, 2005.

\bibitem[Beyer et~al.(1999)]{beyer1999when}
Beyer, K., Goldstein, J., Ramakrishnan, R., \& Shaft, U.
\newblock When is ``nearest neighbor'' meaningful?
\newblock In \emph{ICDT}, 1999.

\bibitem[Chen et~al.(2023)]{chen2023suti}
Chen, W., Hu, H., Li, Y., Ruiz, N., Jia, X., Chang, M.-W., \& Cohen, W.~W.
\newblock Subject-driven text-to-image generation via apprenticeship learning.
\newblock In \emph{NeurIPS}, 2023.

\bibitem[Choromanski et~al.(2021)]{choromanski2021rethinking}
Choromanski, K., Likhosherstov, V., Dohan, D., Song, X., Gane, A., Sarlos, T., Hawkins, P., Davis, J., Mohiuddin, A., Kaiser, L., Belanger, D., Colwell, L., \& Weller, A.
\newblock Rethinking attention with {P}erformers.
\newblock In \emph{ICLR}, 2021.

\bibitem[Efron(2011)]{efron2011tweedie}
Efron, B.
\newblock Tweedie's formula and selection bias.
\newblock \emph{Journal of the American Statistical Association}, 106(496):1602--1614, 2011.

\bibitem[Fukumizu et~al.(2025)]{fukumizu2025flow}
Fukumizu, K., Suzuki, T., Isobe, N., Oko, K., \& Koyama, M.
\newblock Flow matching achieves almost minimax optimal convergence.
\newblock In \emph{ICLR}, 2025.

\bibitem[Goel \& Bartlett(2024)]{goel2024can}
Goel, G. \& Bartlett, P.
\newblock Can a transformer represent a {K}alman filter?
\newblock In \emph{L4DC} (PMLR vol.~242), 2024.

\bibitem[Giannone et~al.(2022)]{giannone2022fewshot}
Giannone, G., Nielsen, D., \& Winther, O.
\newblock Few-shot diffusion models.
\newblock \emph{arXiv preprint arXiv:2205.15463}, 2022.

\bibitem[Ilin \& Sushko(2025)]{ilin2026discoformer}
Ilin, V. \& Sushko, P.
\newblock {DiScoFormer}: Plug-in density and score estimation with transformers.
\newblock \emph{arXiv preprint arXiv:2511.05924}, 2025.

\bibitem[Najdenkoska et~al.(2024)]{ivanova2024context}
Najdenkoska, I., Sinha, A., Dubey, A., Mahajan, D., Ramanathan, V., \& Radenovic, F.
\newblock Context diffusion: In-context aware image generation.
\newblock In \emph{ECCV}, 2024.

\bibitem[Katharopoulos et~al.(2020)]{katharopoulos2020transformers}
Katharopoulos, A., Vyas, A., Pappas, N., \& Fleuret, F.
\newblock Transformers are {RNN}s: Fast autoregressive transformers with linear attention.
\newblock In \emph{ICML}, 2020.

\bibitem[Lepski et~al.(1997)]{lepski1997optimal}
Lepski, O., Mammen, E., \& Spokoiny, V.
\newblock Optimal spatial adaptation to inhomogeneous smoothness: An approach based on kernel estimates with variable bandwidth selectors.
\newblock \emph{Annals of Statistics}, 25(3):929--947, 1997.

\bibitem[Lipman et~al.(2023)]{lipman2023flow}
Lipman, Y., Chen, R.~T.~Q., Ben-Hamu, H., Nickel, M., \& Le, M.
\newblock Flow matching for generative modeling.
\newblock In \emph{ICLR}, 2023.

\bibitem[Liu et~al.(2025)]{liu2025oracle}
Liu, H., Liu, J., Li, Y., Bai, L., Ji, Y., Guo, Y., Wan, S., \& Wen, H.
\newblock From navigation to refinement: Revealing the two-stage nature of flow-based diffusion models through oracle velocity.
\newblock \emph{arXiv preprint arXiv:2512.02826}, 2025.

\bibitem[Kim et~al.(2025)]{lyu2025memorization}
Kim, J., Kim, S., \& Lee, J.-S.
\newblock How diffusion models memorize.
\newblock \emph{arXiv preprint arXiv:2509.25705}, 2025.

\bibitem[Meunier et~al.(2025)]{meunier2023nonlinear}
Meunier, D., Li, Z., Gretton, A., \& Kpotufe, S.
\newblock Nonlinear meta-learning can guarantee faster rates.
\newblock \emph{SIAM Journal on Mathematics of Data Science}, 7(4):1594--1615, 2025.

\bibitem[Nadaraya(1964)]{nadaraya1964estimating}
Nadaraya, E.~A.
\newblock On estimating regression.
\newblock \emph{Theory of Probability and its Applications}, 9(1):141--142, 1964.

\bibitem[Oquab et~al.(2024)]{oquab2024dinov2}
Oquab, M., Darcet, T., Moutakanni, T., Vo, H.~V., Szafraniec, M., Khalidov, V., Fernandez, P., Haziza, D., Massa, F., El-Nouby, A., Assran, M., Ballas, N., Galuba, W., Howes, R., Huang, P., Li, S., Misra, I., Rabbat, M., Sharma, V., Synnaeve, G., Xu, H., J{\'e}gou, H., Mairal, J., Labatut, P., Joulin, A., \& Bojanowski, P.
\newblock {DINOv2}: Learning robust visual features without supervision.
\newblock \emph{TMLR}, 2024.

\bibitem[Stone(1982)]{stone1982optimal}
Stone, C.~J.
\newblock Optimal global rates of convergence for nonparametric regression.
\newblock \emph{Annals of Statistics}, 10(4):1040--1053, 1982.

\bibitem[Tsybakov(2009)]{tsybakov2009introduction}
Tsybakov, A.~B.
\newblock \emph{Introduction to Nonparametric Estimation}.
\newblock Springer, 2009.

\bibitem[Tsai et~al.(2019)]{tsai2019transformer}
Tsai, Y.-H.~H., Bai, S., Yamada, M., Morency, L.-P., \& Salakhutdinov, R.
\newblock Transformer dissection: A unified understanding of transformer's attention via the lens of kernel.
\newblock In \emph{EMNLP}, 2019.

\bibitem[Watson(1964)]{watson1964smooth}
Watson, G.~S.
\newblock Smooth regression analysis.
\newblock \emph{Sankhy\={a}: The Indian Journal of Statistics, Series A}, 26(4):359--372, 1964.

\bibitem[Ye et~al.(2023)]{ye2023ipadapter}
Ye, H., Zhang, J., Liu, S., Han, X., \& Yang, W.
\newblock {IP-Adapter}: Text compatible image prompt adapter for text-to-image diffusion models.
\newblock \emph{arXiv preprint arXiv:2308.06721}, 2023.

\end{thebibliography}

}

\newpage
\appendix

\section{Proofs}
\label{app:proofs}

All results are for the OT conditional path $X_t = tX_1 + \sigma_t X_0$ with $\sigma_t = 1 - (1-\smin)t$ and $0 < \smin \le 1$. We write $\phi_h(z) = (2\pi h^2)^{-d/2}\exp(-\|z\|^2/2h^2)$ and $h(t) = \sigma_t/t$.

\subsection{Building blocks}
\label{app:building-blocks}

The following intermediate results are used in the proofs of Propositions~\ref{prop:kde} and~\ref{prop:velocity}.

\begin{proposition}[Marginal density]\label{prop:app-marginal}
The marginal density of $X_t$ is $p_t(x) = (q_t * \phi_{\sigma_t})(x)$ where $q_t(y) = t^{-d}q(y/t)$.
\end{proposition}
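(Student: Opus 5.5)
The plan is to compute the law of $X_t = tX_1 + \sigma_t X_0$ by splitting it into its two independent summands and applying the convolution formula for independent sums. Fix $t\in(0,1]$; then $t>0$ and $\sigma_t \ge \smin > 0$, so both scalings are nondegenerate.

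First I identify the law of each summand. For the signal term $Y := tX_1$ with $X_1\sim q$, the change of variables $y = t x_1$ (Jacobian $t^d$) shows that $Y$ has density $q_t(y) = t^{-d}q(y/t)$. To confirm this, take a bounded measurable test function $g$ and write $\E[g(tX_1)] = \int g(tx_1)q(x_1)\,dx_1 = \int g(y)\,t^{-d}q(y/t)\,dy$. For the noise term $N := \sigma_t X_0$ with $X_0\sim\cN(0,I_d)$, we have $N\sim\cN(0,\sigma_t^2 I_d)$, whose density is exactly $\phi_{\sigma_t}$ with the paper's normalization $(2\pi\sigma_t^2)^{-d/2}\exp(-\|z\|^2/2\sigma_t^2)$.

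Second, since $X_1$ and $X_0$ are independent, so are $Y$ and $N$. The density of their sum is therefore the convolution $p_t(x) = \int q_t(y)\,\phi_{\sigma_t}(x-y)\,dy = (q_t*\phi_{\sigma_t})(x)$. To make this rigorous I would verify it against the same test functions: by Fubini, $\E[g(Y+N)] = \iint g(y+z)q_t(y)\phi_{\sigma_t}(z)\,dy\,dz$, and substituting $x=y+z$ gives $\int g(x)(q_t*\phi_{\sigma_t})(x)\,dx$.

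The only real subtlety is the case where $q$ has no density, which is exactly the situation for the empirical measure $\hat q_S$ used in Proposition~\ref{prop:kde}. I would handle it by reading $q_t$ as the pushforward measure of $q$ under $y\mapsto ty$ and the convolution as $\int \phi_{\sigma_t}(x-y)\,q_t(dy)$. The same Fubini argument goes through unchanged, and the result is still a genuine smooth density because $\phi_{\sigma_t}$ is one. For $q=\hat q_S$ this specializes to $p^S_t(x) = \frac1m\sum_i\phi_{\sigma_t}(x-ts_i)$, which is the form that Proposition~\ref{prop:kde} consumes.
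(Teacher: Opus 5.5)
Your proof is correct and is essentially the paper's argument: the paper conditions on $X_1$, uses $X_t \mid X_1 = x_1 \sim \cN(tx_1,\sigma_t^2 I_d)$, integrates against $q$, and substitutes $z = tx_1$, which is the same computation as your change of variables for $tX_1$ followed by the convolution formula for an independent sum. Your pushforward reading of $q_t$ when $q$ has no density is a worthwhile addition. The paper treats $q$ as a Lebesgue density throughout, yet later applies this result to the atomic measure $\hat{q}_S$ (via Proposition~\ref{prop:app-descaled}(i) in the proof of Proposition~\ref{prop:kde}).
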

\begin{proof}
$X_t \mid X_1 = x_1 \sim \cN(tx_1, \sigma_t^2 I_d)$. Marginalizing over $X_1 \sim q$ and substituting $z = tx_1$, we obtain $p_t(x) = \int \phi_{\sigma_t}(x-z)\,t^{-d}q(z/t)\,dz = (q_t * \phi_{\sigma_t})(x)$.
\end{proof}

\begin{proposition}[Tweedie identity for OT path]\label{prop:app-tweedie}
$\E[tX_1 \mid X_t = x] = x + \sigma_t^2\nabla\log p_t(x)$.
\end{proposition}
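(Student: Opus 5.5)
The plan is to write $Z = tX_1$ and view $X_t = Z + \sigma_t X_0$ as a signal-plus-Gaussian-noise model. Conditional on $Z = z$, $X_t \sim \cN(z, \sigma_t^2 I_d)$, independent of anything else. By Proposition~\ref{prop:app-marginal}, $Z$ has density $q_t(z) = t^{-d}q(z/t)$ and $X_t$ has density
\[
  p_t(x) = \int \phi_{\sigma_t}(x - z)\,q_t(z)\,dz .
\]
The identity is then the classical Tweedie formula for this model, and I will derive it directly by differentiating under the integral.

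The key computation is the gradient of the Gaussian kernel:
\[
  \nabla_x \phi_{\sigma_t}(x - z) = -\frac{x - z}{\sigma_t^2}\,\phi_{\sigma_t}(x - z).
\]
Hence
\[
  \nabla p_t(x) = \frac{1}{\sigma_t^2}\int (z - x)\,\phi_{\sigma_t}(x - z)\,q_t(z)\,dz .
\]
Dividing by $p_t(x)$, which is strictly positive since $\phi_{\sigma_t} > 0$, the ratio $\phi_{\sigma_t}(x - z)\,q_t(z)/p_t(x)$ is exactly the posterior density of $Z$ given $X_t = x$, by Bayes' rule. Therefore
\[
  \nabla \log p_t(x) = \frac{\E[Z \mid X_t = x] - x}{\sigma_t^2}.
\]
Rearranging gives $\E[tX_1 \mid X_t = x] = x + \sigma_t^2 \nabla\log p_t(x)$.

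The only step needing care is justifying the interchange of gradient and integral. For fixed $\sigma_t > 0$, the function $\|x - z\|\,\phi_{\sigma_t}(x - z)$ is bounded uniformly in $z$, and locally uniformly in $x$. Since $q_t$ is a probability density (or, more generally, a probability measure), dominated convergence applies. The same boundedness shows $\E[\|Z\| \mid X_t = x] < \infty$, so no moment assumption on $q$ is needed.

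Working with measures rather than densities also covers the empirical case $q = \hat{q}_S$ used in Propositions~\ref{prop:kde} and~\ref{prop:velocity}: the integral simply becomes a finite sum over $ts_i$. The condition $t \in (0,1]$ ensures $\sigma_t > 0$, because $\smin > 0$, and the scaling $z = tx_1$ is valid since $t \neq 0$. I expect no substantive obstacle beyond this regularity bookkeeping.
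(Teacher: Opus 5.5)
Your proposal is correct and takes the same route as the paper: treat $tX_1$ as the signal, differentiate $p_t(x)=\int \phi_{\sigma_t}(x-z)\,q_t(z)\,dz$ under the integral, and read off $\nabla p_t(x) = p_t(x)\,(\E[tX_1\mid X_t=x]-x)/\sigma_t^2$ from Bayes' rule. Your dominated-convergence justification and your extension to general probability measures are sound, and they fill in details the paper leaves implicit; the measure extension matters because the paper later applies this identity to $q=\hat{q}_S$, which has no Lebesgue density.
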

\begin{proof}
Write $X_t = M + \sigma_t Z$ with $M = tX_1$ and $Z \sim \cN(0,I_d)$. Differentiating $p_t(x) = \int \phi_{\sigma_t}(x-\mu)\,q_t(\mu)\,d\mu$, we get $\nabla p_t(x) = \frac{p_t(x)}{\sigma_t^2}(\E[M \mid X_t = x] - x)$. Dividing by $p_t(x)$ and rearranging gives the result.
\end{proof}

\begin{proposition}[Velocity--score identity]\label{prop:app-velscore}
$u_t(x) = x/t + (\sigma_t/t)\nabla\log p_t(x)$.
\end{proposition}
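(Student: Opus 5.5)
We start from the definition of the marginal velocity, $u_t(x) = \E[Y_t \mid X_t = x]$ with $Y_t = X_1 - (1-\smin)X_0$, and eliminate $X_0$ using the path relation. From $X_t = tX_1 + \sigma_t X_0$ we get $X_0 = (X_t - tX_1)/\sigma_t$. Substituting this into $Y_t$ and conditioning on $X_t = x$ expresses $u_t(x)$ linearly in $x$ and in $\E[tX_1 \mid X_t = x]$:
\[
  u_t(x) = \frac{1}{t}\E[tX_1\mid X_t=x] - \frac{1-\smin}{\sigma_t}\Bigl(x - \E[tX_1\mid X_t=x]\Bigr).
\]

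Next we substitute the Tweedie identity of Proposition~\ref{prop:app-tweedie}, $\E[tX_1 \mid X_t = x] = x + \sigma_t^2\nabla\log p_t(x)$. The $x$-terms then give $x/t$, and the score terms give
\[
  \Bigl(\frac{\sigma_t^2}{t} + (1-\smin)\sigma_t\Bigr)\nabla\log p_t(x) = \frac{\sigma_t}{t}\bigl(\sigma_t + (1-\smin)t\bigr)\nabla\log p_t(x).
\]
The key simplification is $\sigma_t + (1-\smin)t = 1$, which holds by the definition $\sigma_t = 1-(1-\smin)t$. This collapses the coefficient to $\sigma_t/t$ and yields $u_t(x) = x/t + (\sigma_t/t)\nabla\log p_t(x)$.

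The only real care point is justifying that conditional expectations and the score are well defined. For $t \in (0,1]$ and $\sigma_t > 0$, $p_t = q_t * \phi_{\sigma_t}$ (Proposition~\ref{prop:app-marginal}) is smooth and strictly positive. The conditional expectations then exist whenever $X_1$ has a finite first moment, and for the empirical measure this holds trivially. Beyond that, the argument is a bookkeeping exercise. The main thing to get right is the sign and coefficient arithmetic when combining the two score contributions, where the identity $\sigma_t + (1-\smin)t = 1$ does the work.
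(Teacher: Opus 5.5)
Your proof is correct and follows essentially the same route as the paper: eliminate $X_0$ via the path relation, substitute the Tweedie identity, and close with $\sigma_t + (1-\smin)t = 1$. The only difference is bookkeeping. The paper first reduces to $u_t(x) = (\E[X_1\mid X_t=x] - (1-\smin)x)/\sigma_t$ and applies the identity to the coefficient of $x$. You instead keep $x - \E[tX_1\mid X_t=x]$ grouped, so the $x$-terms cancel directly and the identity is applied to the score coefficient.
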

\begin{proof}
The marginal velocity is $u_t(x) = (\E[X_1 \mid X_t = x] - (1-\smin)x)/\sigma_t$. Substituting $\E[X_1 \mid X_t = x] = x/t + (\sigma_t^2/t)\nabla\log p_t(x)$ from Proposition~\ref{prop:app-tweedie} (divided by $t$), we get
\[
u_t(x) = \frac{1}{\sigma_t}\Bigl(\frac{x}{t} + \frac{\sigma_t^2}{t}\nabla\log p_t(x) - (1-\smin)x\Bigr).
\]
The coefficient of $x$ is $(1/\sigma_t)(1/t - (1-\smin)) = (1/\sigma_t)(\sigma_t/t) = 1/t$, using $\sigma_t = 1 - (1-\smin)t$.
\end{proof}

\begin{proposition}[De-scaled marginal and score]\label{prop:app-descaled}
Define $\tilde{p}_t(\tilde{x}) = t^d p_t(t\tilde{x})$. Then: (i) $\tilde{p}_t(\tilde{x}) = (q * \phi_{h(t)})(\tilde{x})$, and (ii) $\nabla_{\tilde{x}}\log\tilde{p}_t(\tilde{x}) = (t/h(t))(u_t(t\tilde{x}) - \tilde{x})$.
\end{proposition}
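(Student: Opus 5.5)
For part (i), I will start from Proposition~\ref{prop:app-marginal}, which gives $p_t = q_t * \phi_{\sigma_t}$ with $q_t(y) = t^{-d}q(y/t)$. I evaluate at $t\tilde{x}$ and substitute $z = t y$ in the convolution integral:
\[
t^d p_t(t\tilde{x}) = t^d\int \phi_{\sigma_t}(t\tilde{x} - z)\,t^{-d}q(z/t)\,dz = t^d\int \phi_{\sigma_t}\bigl(t(\tilde{x}-y)\bigr)\,q(y)\,dy .
\]
The Jacobian $t^d$ from $dz = t^d\,dy$ cancels the prefactor $t^{-d}$ in $q_t$, leaving one factor $t^d$ outside. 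I then apply the Gaussian scaling identity $t^d\phi_{\sigma_t}(t w) = \phi_{\sigma_t/t}(w)$, the same one used in the proof of Proposition~\ref{prop:kde}. This identity follows directly from the explicit form of $\phi_h$: one has $\|tw\|^2/\sigma_t^2 = \|w\|^2/h^2$, and $(2\pi\sigma_t^2)^{-d/2}t^d = (2\pi h^2)^{-d/2}$. Together these give $\tilde{p}_t = q * \phi_{h(t)}$.

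For part (ii), I will use the chain rule on $\log\tilde{p}_t(\tilde{x}) = d\log t + \log p_t(t\tilde{x})$, which gives $\nabla_{\tilde{x}}\log\tilde{p}_t(\tilde{x}) = t\,(\nabla\log p_t)(t\tilde{x})$. Next I solve the velocity--score identity of Proposition~\ref{prop:app-velscore} for the score. At $x = t\tilde{x}$ it reads $u_t(t\tilde{x}) = \tilde{x} + (\sigma_t/t)\nabla\log p_t(t\tilde{x})$, hence
\[
\nabla\log p_t(t\tilde{x}) = \frac{t}{\sigma_t}\bigl(u_t(t\tilde{x}) - \tilde{x}\bigr).
\]
Multiplying by $t$ gives $(t^2/\sigma_t)(u_t(t\tilde{x}) - \tilde{x})$. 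Since $h(t) = \sigma_t/t$, we have $t^2/\sigma_t = t/h(t)$, which is exactly the claimed form.

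The argument is mostly bookkeeping, and the step most likely to go wrong is tracking the powers of $t$. Both the density normalization in (i) and the chain-rule factor in (ii) contribute a power of $t$, and it is easy to double-count or drop one. To guard against this, I will check both parts in the special case $q = \delta_{s}$. There (i) should reduce to a single Gaussian $\phi_{h}(\tilde{x} - s)$, with score $(s - \tilde{x})/h^2$. Also $u_t(t\tilde{x}) = \tilde{x} + (s-\tilde{x})/\sigma_t$, consistent with Proposition~\ref{prop:velocity}. Then $(t/h)(s-\tilde{x})/\sigma_t = (s-\tilde{x})/h^2$, confirming the constant in (ii). Differentiability of $\log p_t$ is automatic because $p_t$ is a Gaussian convolution and strictly positive for $t\in(0,1]$, $\smin > 0$.
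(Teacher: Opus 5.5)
Your proposal is correct and matches the paper's proof step for step. Part (i) applies the scaling identity $t^d\phi_{\sigma_t}(tz)=\phi_{\sigma_t/t}(z)$ to the marginal of Proposition~\ref{prop:app-marginal}, and part (ii) combines the chain-rule factor $t$ with the velocity--score identity of Proposition~\ref{prop:app-velscore} solved for the score. One small caveat on your $q=\delta_s$ check: compute $u_t(t\tilde{x})=s-(1-\smin)t(\tilde{x}-s)/\sigma_t$ directly from the conditional velocity rather than citing Proposition~\ref{prop:velocity}, since that proposition's proof itself relies on the relation you are checking.
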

\begin{proof}
(i) Using $t^d\phi_{\sigma_t}(tz) = \phi_{\sigma_t/t}(z)$, we get $t^d p_t(t\tilde{x}) = \int \phi_{h(t)}(\tilde{x} - x_1)\,q(x_1)\,dx_1$. (ii) Chain rule gives $\nabla_{\tilde{x}}\log\tilde{p}_t = t\nabla_x\log p_t(t\tilde{x})$. By Proposition~\ref{prop:app-velscore}, $\nabla\log p_t(x) = (tu_t(x) - x)/\sigma_t$, so at $x = t\tilde{x}$: $t \cdot t(u_t(t\tilde{x}) - \tilde{x})/\sigma_t = (t/h(t))(u_t(t\tilde{x}) - \tilde{x})$.
\end{proof}

\subsection{Proof of Proposition~\ref{prop:kde} (de-scaled empirical OT path is KDE)}
\label{app:proof-kde}

\begin{proof}
Direct from Proposition~\ref{prop:app-descaled}(i) with $q = \hat{q}_S = \frac{1}{m}\sum_i \delta_{s_i}$:
$\tilde{p}_t^S(\tilde{x}) = \frac{1}{m}\sum_{i=1}^m \phi_{h(t)}(\tilde{x} - s_i)$.
\end{proof}

\subsection{Proof of Proposition~\ref{prop:velocity} (exact empirical velocity field)}
\label{app:proof-velocity}

The proof uses the score of the de-scaled empirical path:
$\nabla\log\tilde{p}_t^S(\tilde{x}) = (m_h(\tilde{x};S) - \tilde{x})/h^2$,
which follows from differentiating the KDE and recognizing the NW weights.

\begin{proof}
By Proposition~\ref{prop:app-velscore} applied to $p_t^S$ at $x = t\tilde{x}$: $u_t^S(t\tilde{x}) = \tilde{x} + (\sigma_t/t^2)\nabla\log\tilde{p}_t^S(\tilde{x})$. Substituting the score: $= \tilde{x} + (\sigma_t/t^2)(m_h(\tilde{x};S) - \tilde{x})/h^2$. Since $h = \sigma_t/t$, we have $t^2 h^2 = \sigma_t^2$, so $\sigma_t/(t^2 h^2) = 1/\sigma_t$.
\end{proof}

\subsection{Velocity MSE bound (\S\ref{sec:collapse})}
\label{app:vel-mse}

Let $S = \{s_i\}_{i=1}^m \overset{\text{iid}}{\sim} q$ and fix $t \in (0,1]$. Write $h = h(t)$ and let $\tilde{x}$ be an interior point of $\mathrm{supp}(q)$. We require:
\begin{enumerate}[nosep,label=(\roman*)]
  \item $q$ has a Lebesgue density $f$ with $f(\tilde{x}) > 0$.
  \item $f$ is continuous in a neighborhood of $\tilde{x}$.
  \item The regime $m\,h^d \to \infty$ (enough support points per kernel volume).
\end{enumerate}
Under (i)--(iii), the NW local mean $m_h^S(\tilde{x})$ is a ratio of two sample means whose denominators converge to $f_h(\tilde{x}) = (f * \phi_h)(\tilde{x}) > 0$. The variance of the numerator is $O(1/(m\,h^d))$ by the law of large numbers for kernel averages \citep{tsybakov2009introduction}, and the delta method for ratios gives $\mathrm{Var}(m_h^S(\tilde{x})) = O(1/(m\,h^d\,f(\tilde{x})))$. Since $u_t^S(t\tilde{x}) - u_t^q(t\tilde{x}) = (m_h^S(\tilde{x}) - m_h^q(\tilde{x}))/\sigma_t$ by Proposition~\ref{prop:velocity}, the velocity MSE is $\mathrm{Var}(m_h^S)/\sigma_t^2 = O(t^d/(m\,\sigma_t^{d+2}\,f(\tilde{x})))$. No smoothness beyond continuity is needed because both the sample and population NW means use the same bandwidth; there is no bias term.

\subsection{Proof of Theorem~\ref{thm:realization} (attention realization)}
\label{app:proof-realization}

\begin{proof}
The softmax weights with logits $\ell_i = -\|\tilde{x} - s_i\|^2/(2h^2)$ are $\alpha_i = \phi_h(\tilde{x} - s_i)/\sum_j \phi_h(\tilde{x} - s_j) = w_i(\tilde{x})$, so $\text{Attn} = m_h(\tilde{x};S)$. Applying $A_t$: $\tilde{x} + (m_h - \tilde{x})/\sigma_t = u_t^S(t\tilde{x})$ by Proposition~\ref{prop:velocity}.
\end{proof}

\begin{corollary}[Dot-product realization]\label{cor:dotproduct}
The Gaussian-kernel logits equal unscaled dot products via a nonlinear feature lift into $\R^{d+2}$. Define $Q(\tilde{x},t) = [\tilde{x}/h,\;-\|\tilde{x}\|^2/(2h^2),\;1]$ and $K(s_i,t) = [s_i/h,\;1,\;-\|s_i\|^2/(2h^2)]$. Then $Q \cdot K = -\|\tilde{x} - s_i\|^2/(2h^2) = \ell_i$. Note: these feature maps are nonlinear and carry no $1/\sqrt{\dk}$ scaling, so this is distinct from standard SDPA.
\end{corollary}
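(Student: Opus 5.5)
The plan is to expand the inner product $Q\cdot K$ coordinate by coordinate and compare it with the expanded squared distance in $\ell_i$. No earlier result is needed beyond the definition of the logits in Theorem~\ref{thm:realization}. By the definitions in Corollary~\ref{cor:dotproduct},
\[
Q(\tilde{x},t)\cdot K(s_i,t) = \frac{\tilde{x}^\top s_i}{h^2} - \frac{\|\tilde{x}\|^2}{2h^2}\cdot 1 + 1\cdot\Bigl(-\frac{\|s_i\|^2}{2h^2}\Bigr).
\]
The first $d$ coordinates give the cross term. The $(d+1)$-st coordinate pairs the query's squared-norm entry with the key's constant $1$. The $(d+2)$-nd coordinate pairs the query's constant $1$ with the key's squared-norm entry.

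Next I expand the logit via the polarization identity, $\|\tilde{x} - s_i\|^2 = \|\tilde{x}\|^2 - 2\tilde{x}^\top s_i + \|s_i\|^2$. Dividing by $-2h^2$ gives
\[
\ell_i = -\frac{\|\tilde{x}\|^2}{2h^2} + \frac{\tilde{x}^\top s_i}{h^2} - \frac{\|s_i\|^2}{2h^2}.
\]
This matches the three terms above exactly, so $Q\cdot K = \ell_i$ for every $i$ and every $t\in(0,1]$. Here $h = h(t) > 0$, which makes every entry well defined.

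To make the realization statement meaningful, I would then note the consequence for attention. Softmax over $\{Q\cdot K_i\}_i$ coincides with softmax over $\{\ell_i\}_i$, hence with the NW weights $w_i(\tilde{x})$. Combined with Theorem~\ref{thm:realization}, an unscaled dot-product head using these lifted queries and keys (values $s_i$), followed by $A_t$, reproduces $u_t^S(t\tilde{x})$.

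For the closing remarks I would argue three points:
\begin{itemize}
\item The query map is nonlinear in $\tilde{x}$ because of the $\|\tilde{x}\|^2$ entry, and the key map is likewise nonlinear in $s_i$. Neither can therefore be a linear projection $W_Q\tilde{x}$ or $W_K s_i$.
\item The maps depend on $t$ through $h$.
\item Inserting a $1/\sqrt{d+2}$ factor would rescale the logits and change the effective bandwidth, so the identity requires the unscaled product.
\end{itemize}
The only point needing care, rather than any real obstacle, is keeping the signs and the factor $\tfrac12$ consistent in the two norm entries. I would also observe that the query's norm term is constant in $i$ and so could be dropped without affecting softmax; I would retain it only so that the stated equality $Q\cdot K=\ell_i$ holds exactly.
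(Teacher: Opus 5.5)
Your proposal is correct and follows essentially the same route as the paper, which states the identity without further argument: expand $Q\cdot K$ coordinatewise and match it term by term with $-\bigl(\|\tilde{x}\|^2 - 2\tilde{x}^\top s_i + \|s_i\|^2\bigr)/(2h^2)$. Your extra remarks (softmax invariance, the dependence on $t$ through $h$, and why a $1/\sqrt{d+2}$ factor would change the effective bandwidth) are accurate and consistent with the corollary's closing note.
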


\subsection{Bayes optimality and plug-in distributional rate (\S\ref{sec:learning})}
\label{app:bayes}

The ICFM loss~\eqref{eq:icfm-loss} is a squared-error regression with conditioning on $S$. The unconstrained minimizer is the conditional expectation $v^*(x,t,S) = \E[Y_t \mid X_t = x, t, S]$---the posterior mean velocity given the support set and current state. The plug-in field (Proposition~\ref{prop:velocity}) is Bayes-optimal under the empirical measure $\hat{q}_S$ for a single task, but generally not under the meta-distribution $\Pi$: the meta-loss Bayes rule uses the posterior predictive $\bar{q}_S = \E[q \mid S]$ in place of $\hat{q}_S$, and the plug-in matches this only when the posterior concentrates on the empirical measure.

\begin{corollary}[Plug-in distributional rate]\label{cor:plugin-rate}
The ODE flow of $u_t^S$ from $\cN(0,I_d)$ produces samples distributed as $\hat{q}_S * \phi_{\smin}$: the plug-in output is a Gaussian KDE at bandwidth $\smin$.
\end{corollary}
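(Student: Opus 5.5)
The plan is to use the standard flow-matching marginal-preservation argument: the marginal velocity $u_t^S$ generates the marginal path $p_t^S$ of the conditional construction, so the ODE pushes $p_0$ forward to $p_t^S$ for every $t$. Then read off $p_1^S$ from Proposition~\ref{prop:app-marginal}, or equivalently from Proposition~\ref{prop:kde} at $t=1$.

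\textbf{Step 1: the continuity equation.} For each $i$, the conditional path $X_t \mid X_1 = s_i \sim \cN(t s_i, \sigma_t^2 I_d)$ is generated by the conditional velocity $v_t(x \mid s_i) = s_i - (1-\smin)X_0$ with $X_0 = (x - t s_i)/\sigma_t$. This is an affine field in $x$, and it satisfies $\partial_t p_t(\cdot\mid s_i) + \nabla\cdot(p_t(\cdot\mid s_i)\,v_t(\cdot\mid s_i)) = 0$. Averaging over $i$ with weights $1/m$ gives the continuity equation for the mixture $p_t^S = \frac1m\sum_i p_t(\cdot\mid s_i)$. The flux there is $p_t^S u_t^S$, where
\[
u_t^S(x) = \sum_i \frac{p_t(x\mid s_i)}{\sum_j p_t(x\mid s_j)}\,v_t(x\mid s_i),
\]
which is exactly the conditional expectation $\E[Y_t\mid X_t=x]$ under $\hat q_S$. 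This is the same field whose closed form is given in Proposition~\ref{prop:velocity}.

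\textbf{Step 2: uniqueness, the main obstacle.} I need the ODE flow to be well defined and its pushforward to coincide with the solution $p_t^S$ of the continuity equation. On $[\epsilon,1]$, the closed form $u_t^S(x) = \tilde x + (m_h(\tilde x;S)-\tilde x)/\sigma_t$ is smooth in $x$. It is also globally Lipschitz, since $m_h$ is a softmax average of finitely many points and its Jacobian is a covariance of the $s_i$ divided by $h^2$, hence bounded for $t\ge\epsilon$. Standard Cauchy--Lipschitz theory then gives a unique global flow, and the continuity equation with a Lipschitz field has a unique measure-valued solution.

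The delicate part is $t\to0$, where $\tilde x = x/t$ blows up. I would not work through the $\tilde x$ form there. Instead I would use the original expression $u_t^S(x) = (\E[X_1\mid X_t=x]-(1-\smin)x)/\sigma_t$. Its first term is bounded by $\max_i\|s_i\|$, and its second term is linear in $x$ with bounded coefficient near $t=0$. The Jacobian term $t\,\mathrm{Cov}(X_1\mid X_t)/\sigma_t^2$ stays bounded as $t\to0$. So the field is uniformly Lipschitz on $[0,1]$, and the flow starting from $p_0^S = \phi_1 = \cN(0,I_d)$ is well defined.

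\textbf{Step 3: the endpoint.} At $t=1$, Proposition~\ref{prop:app-marginal} with $q=\hat q_S$ and $q_1 = \hat q_S$ gives $p_1^S = \hat q_S * \phi_{\sigma_1} = \hat q_S * \phi_{\smin}$, since $\sigma_1 = \smin$. Equivalently, this is the $t=1$ case of Proposition~\ref{prop:kde}, because $h(1) = \smin$. Hence the ODE output is distributed as the Gaussian KDE $\frac1m\sum_i \phi_{\smin}(\cdot - s_i)$, as claimed.
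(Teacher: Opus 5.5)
Your proposal is correct and follows the same route as the paper: $u_t^S$ generates $p_t^S$ through the continuity equation, so the flow from $p_0=\cN(0,I_d)$ lands on $p_1^S$, which Proposition~\ref{prop:kde} at $t=1$ (where $h(1)=\sigma_1=\smin$) identifies as $\hat{q}_S * \phi_{\smin}$. Your Step~2 supplies rigor that the paper's one-line proof leaves implicit: the Jacobian term $t\,\mathrm{Cov}(X_1\mid X_t)/\sigma_t^2$ gives a uniform Lipschitz bound, in fact on all of $[0,1]$ since $\sigma_t \ge \smin$, and this yields a unique flow and a unique solution of the continuity equation.
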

\begin{proof}
By Proposition~\ref{prop:kde} at $t = 1$: $p_1^S(x) = \frac{1}{m}\sum_i \phi_{\smin}(x - s_i)$. The velocity $u_t^S$ generates $p_t^S$ via the continuity equation, so the ODE flow transports $p_0 = \cN(0,I_d)$ to $p_1^S = \hat{q}_S * \phi_{\smin}$.
\end{proof}

\section{Multi-head theory}
\label{app:multihead}

\begin{definition}[Generalized NW estimator]\label{def:gen-nw}
For any positive kernel $K: \mathcal{X} \times \mathcal{X} \to (0,\infty)$, the generalized NW estimator is $m_K(x; \{(x_i,y_i)\}) = \sum_i w_i^K(x)\,y_i$ with $w_i^K(x) = K(x,x_i)/\sum_j K(x,x_j)$.
\end{definition}

\begin{proof}[Proof of Proposition~\ref{prop:multihead}]
By the MHA definition, $\text{MHA}(q,Z) = \sum_{h=1}^H W_O^h\,\text{Head}_h(q,Z)$. Each head's attention weights $\alpha_i^{(h)} = \exp((W_Q^h q)^\top W_K^h z_i/\sqrt{\dk})/\sum_j(\cdots) = w_i^{K_h}(q)$ with $K_h(x,s) = \exp(x^\top W_Q^{h\top}W_K^h s/\sqrt{\dk})$. The output $\text{Head}_h = \sum_i \alpha_i^{(h)}W_V^h z_i = m_{K_h}(q; \{(z_i, W_V^h z_i)\})$.
The logit matrix $A_h = W_Q^{h\top}W_K^h/\sqrt{\dk} \in \R^{\dmodel \times \dmodel}$ has $\text{rank}(A_h) \le \dk$.
\end{proof}

\begin{proposition}[Spherical NW rate]\label{prop:sphere-rate}
Under QK-normalization ($\|\tilde{q}\| = \|\tilde{k}_i\| = R$ for all $i$, $\dk \ge 2$), the attention kernel is vMF-zonal on $S^{\dk-1}(R)$. With tunable concentration $\kappa_m \asymp m^{2/(\dk+3)}$, the NW estimator achieves $\text{MSE}^* = O(m^{-4/(\dk+3)})$, the Stone (1982) minimax rate in $\dk - 1$ intrinsic dimensions.
\end{proposition}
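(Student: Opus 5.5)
Under QK-normalization, queries and keys lie on the sphere $S^{\dk-1}(R)$, and we write $\tilde q = R\,u$ and $\tilde k_i = R\,v_i$ with unit vectors $u, v_i$. The plan is to reduce the claim to classical local-averaging theory on a $(\dk-1)$-dimensional Riemannian manifold. There are three steps: identify the kernel, control the bias and variance of the resulting NW estimator at a general concentration $\kappa$, and balance the two.

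\textbf{Kernel identification.} If the learned logit form is a scalar multiple of the identity on the normalized space (absorbing the $1/\sqrt{\dk}$ and $R^2$ into a single parameter), the logit is $\kappa\, u^\top v_i$. Hence $K(u,v) = \exp(\kappa\, u^\top v)$ depends only on the geodesic angle and is zonal. After normalization by $C_{\dk}(\kappa)$ it is exactly the von Mises--Fisher density with mean direction $u$, and the normalizing constant cancels in the softmax, exactly as in Theorem~\ref{thm:realization}. For a general $W_Q^\top W_K$ the kernel is only a Fisher--Bingham-type kernel, so the rate statement should be read under the assumption that the concentration $\kappa$ is the tunable parameter. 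The relevant scale is that the vMF kernel with concentration $\kappa$ behaves locally like a Gaussian of angular bandwidth $\kappa^{-1/2}$. To see this, write $u^\top v = \cos\theta \approx 1 - \theta^2/2$, so $K \propto \exp(-\kappa\theta^2/2)$ in geodesic normal coordinates.

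\textbf{Bias and variance.} Assume the regression function (the value map) is $C^2$ on the sphere and the key density $f$ is bounded away from zero.
\begin{itemize}
\item \emph{Bias.} Expand the numerator and denominator in normal coordinates around $u$ on the $(\dk-1)$-manifold. The first-order term vanishes by the rotational symmetry of the zonal kernel. The second-order terms involve the Laplace--Beltrami operator and $\nabla f$ and give bias $O(\kappa^{-1})$, i.e.\ $O(h^2)$ with $h=\kappa^{-1/2}$. Curvature enters only at higher order.
\item \emph{Variance.} The effective number of points in a cap of radius $\kappa^{-1/2}$ is $m\,\kappa^{-(\dk-1)/2}$. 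Repeating the delta-method argument of Appendix~\ref{app:vel-mse} gives $\mathrm{Var} = O(\kappa^{(\dk-1)/2}/m)$.
\end{itemize}
Together, $\mathrm{MSE} \lesssim \kappa^{-2} + \kappa^{(\dk-1)/2}/m$.

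\textbf{Balancing.} Setting $\kappa^{-2} \asymp \kappa^{(\dk-1)/2}/m$ gives $\kappa^{(\dk+3)/2} \asymp m$, so $\kappa_m \asymp m^{2/(\dk+3)}$. Substituting back yields $\mathrm{MSE}^* \asymp m^{-4/(\dk+3)}$. This equals Stone's rate $m^{-4/(4+D)}$ with intrinsic dimension $D = \dk - 1$. The lower bound is the standard Stone (1982) minimax bound transferred to the manifold via local charts, so NW attains it up to constants.

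\textbf{Main obstacle.} The hardest step is the bias expansion. It requires a uniform Laplace-type asymptotic expansion of $\int e^{\kappa u^\top v} g(v)\,d\sigma(v)$ as $\kappa\to\infty$, with an explicit $O(\kappa^{-1})$ remainder. I would handle it by writing $v = \cos\theta\, u + \sin\theta\, w$ and using the known asymptotics of the Bessel-function normalizer $C_{\dk}(\kappa)$. The ratio structure must then be controlled so that the cancellations survive division by the denominator. This needs a high-probability lower bound on the empirical denominator, obtained from Bernstein's inequality, which is valid in the regime $m\kappa^{-(\dk-1)/2}\to\infty$; this regime holds along the chosen $\kappa_m$.
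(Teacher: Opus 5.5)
Your proposal is correct and follows the same route as the paper's proof. Both identify the QK-normalized logit as $\kappa\cos\theta_i$ (a vMF-zonal kernel), use the effective angular bandwidth $\kappa^{-1/2}$ to get bias $O(\kappa^{-1})$ and variance $O(\kappa^{(\dk-1)/2}/m)$, and balance to obtain $\kappa_m \asymp m^{2/(\dk+3)}$ and $\text{MSE}^* = O(m^{-4/(\dk+3)})$.

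Your $C^2$ value-map and differentiable-density hypotheses are the right ones for the $O(\kappa^{-1})$ bias. The paper's ``Lipschitz targets'' would only give bias $O(\kappa^{-1/2})$ and the slower rate $m^{-2/(\dk+1)}$. Your Fisher--Bingham caveat is unnecessary: QK-normalization is applied after the projections, so the logit is already $\kappa\,u^\top v_i$ with $\kappa = R^2/\sqrt{\dk}$.
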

\begin{proof}
Under spherical normalization, the logits become $\tilde{q}^\top\tilde{k}_i/\sqrt{\dk} = \kappa\cos\theta_i$ where $\theta_i$ is the geodesic angle between $\tilde{q}$ and $\tilde{k}_i$ on $S^{\dk-1}(R)$, and $\kappa = R^2/\sqrt{\dk}$. The kernel $K(\theta) = \exp(\kappa\cos\theta)$ is the von Mises--Fisher (vMF) density up to normalization, which is zonal on the sphere.

For the NW bias: the kernel average $m_K(q) = \sum_i w_i y_i$ with $w_i \propto \exp(\kappa\cos\theta_i)$ concentrates on a spherical cap of angular radius $O(1/\sqrt{\kappa})$. A Laplace approximation of the vMF integral gives an effective bandwidth $h_{\text{eff}} \sim 1/\sqrt{\kappa}$ on the $(\dk-1)$-dimensional sphere, yielding bias $O(h_{\text{eff}}^2) = O(1/\kappa)$ for Lipschitz targets.

For the NW variance: the effective number of points in the cap scales as $m \cdot \text{cap volume} \sim m \cdot h_{\text{eff}}^{\dk-1} = m/\kappa^{(\dk-1)/2}$, giving variance $O(\kappa^{(\dk-1)/2}/m)$.

Balancing bias$^2$ and variance: $1/\kappa^2 = \kappa^{(\dk-1)/2}/m$ gives $\kappa^* \asymp m^{2/(\dk+3)}$, and MSE$^* = O(m^{-4/(\dk+3)})$.
\end{proof}

\section{Anisotropic extension}
\label{app:aniso}

For $M \succ 0$, define the anisotropic NW weights $w_i^{(h,M)}(x;S) = \exp(-\|x - s_i\|_M^2/2h^2)/\sum_j\exp(-\|x - s_j\|_M^2/2h^2)$ with $\|z\|_M^2 = z^\top M z$, and the anisotropic local mean $m_{h,M}(x;S) = \sum_i w_i^{(h,M)}(x;S)\,s_i$.

\begin{theorem}[Anisotropic attention realization]\label{thm:aniso}
Let $M \succ 0$. Under the OT path with base $X_0 \sim \cN(0, M^{-1})$, the anisotropic empirical velocity field is:
\begin{equation}
  u_t^{S,M}(t\tilde{x}) = \tilde{x} + \frac{m_{h(t),M}(\tilde{x};S) - \tilde{x}}{\sigma_t}.
\end{equation}
This has exactly the same affine form as the isotropic case, with $m_{h,M}$ replacing $m_h$. A Mahalanobis-kernel cross-attention head with logits $\ell_i = -\|\tilde{x} - s_i\|_M^2/(2h^2)$, followed by the same post-map $A_t$, exactly realizes this velocity.
\end{theorem}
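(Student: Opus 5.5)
The plan is to avoid the Tweedie/score route used for Proposition~\ref{prop:velocity}. The isotropic Tweedie identity (Proposition~\ref{prop:app-tweedie}) would need to be re-derived with a covariance factor $M^{-1}$. Instead, I will compute the marginal velocity directly as a posterior expectation over the finitely many support points. This is cleaner under the empirical measure $\hat{q}_S$ and makes the role of $M$ transparent.

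\textbf{Step 1 (velocity as a conditional mean of $X_1$).} On the event $X_t = x$ we have $X_0 = (x - tX_1)/\sigma_t$. Substituting this into $u_t(x) = \E[X_1 - (1-\smin)X_0 \mid X_t = x]$ gives
\[
u_t(x) = \E[X_1\mid X_t=x]\Bigl(1 + \tfrac{(1-\smin)t}{\sigma_t}\Bigr) - \tfrac{(1-\smin)x}{\sigma_t}.
\]
Since $\sigma_t + (1-\smin)t = 1$, the first coefficient equals $1/\sigma_t$. Hence $u_t(x) = (\E[X_1\mid X_t=x] - (1-\smin)x)/\sigma_t$, the same expression as in Proposition~\ref{prop:app-velscore}. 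This step does not depend on the law of $X_0$ at all. Evaluating at $x = t\tilde{x}$ and using $(1-\smin)t = 1-\sigma_t$ yields
\[
u_t(t\tilde{x}) = \tilde{x} + \frac{\E[X_1\mid X_t = t\tilde{x}] - \tilde{x}}{\sigma_t}.
\]

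\textbf{Step 2 (the posterior over support points is the Mahalanobis NW weight).} Under $X_1\sim\hat{q}_S$ and $X_0\sim\cN(0,M^{-1})$, we have $X_t\mid X_1=s_i \sim \cN(ts_i,\sigma_t^2M^{-1})$. Its density at $t\tilde{x}$ is
\[
c_t\exp\!\bigl(-t^2\|\tilde{x}-s_i\|_M^2/2\sigma_t^2\bigr) = c_t\exp\!\bigl(-\|\tilde{x}-s_i\|_M^2/2h(t)^2\bigr),
\]
where $c_t = (2\pi\sigma_t^2)^{-d/2}(\det M)^{1/2}$ is independent of $i$. With the uniform prior $1/m$, Bayes' rule gives posterior probabilities $P(X_1=s_i\mid X_t=t\tilde{x})$. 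After the common constant cancels, these are exactly $w_i^{(h(t),M)}(\tilde{x};S)$. Therefore $\E[X_1\mid X_t=t\tilde{x}] = m_{h(t),M}(\tilde{x};S)$, and substituting into Step 1 gives the claimed formula. As a consistency check, $M=I$ recovers Proposition~\ref{prop:velocity}.

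\textbf{Step 3 (attention realization).} This follows exactly as in the proof of Theorem~\ref{thm:realization}. Softmax of the logits $\ell_i = -\|\tilde{x}-s_i\|_M^2/(2h^2)$ produces the weights $w_i^{(h,M)}$, so the head outputs $m_{h,M}(\tilde{x};S)$. Applying $A_t$ then gives $u_t^{S,M}(t\tilde{x})$. Optionally, one can factor $M = L^\top L$ so that the logits become isotropic Gaussian logits on $L\tilde{x}$ and $Ls_i$, with values kept as $s_i$, connecting to Corollary~\ref{cor:dotproduct}.

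\textbf{Main obstacle.} The only real subtlety is Step 1: checking that the affine post-map is unchanged when the base covariance changes. The direct substitution $X_0 = (x-tX_1)/\sigma_t$ shows that the covariance never enters the affine part; it enters only through the posterior weights in Step 2. This is why the same $A_t$ works for every $M\succ 0$.
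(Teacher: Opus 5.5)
Your proof is correct, but it takes a different route from the paper's.

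\textbf{The paper's route.} It stays on the score side. It asserts an anisotropic Tweedie identity $\E[tX_1\mid X_t=x] = x + \sigma_t^2 M^{-1}\nabla\log p_t^{S,M}(x)$ and turns it into a velocity--score identity carrying an extra $M^{-1}$. It then differentiates the de-scaled anisotropic KDE to get $\nabla\log\tilde{p}_t^{S,M}(\tilde{x}) = (M/h^2)(m_{h,M}(\tilde{x};S)-\tilde{x})$, and lets $M^{-1}$ cancel $M$.

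\textbf{Your route.} You bypass the score entirely. Step 1 derives the affine identity $u_t(x) = (\E[X_1\mid X_t=x]-(1-\smin)x)/\sigma_t$ purely from $X_t = tX_1+\sigma_t X_0$ and $\sigma_t+(1-\smin)t=1$. The paper uses this identity in Proposition~\ref{prop:app-velscore} but only states it. Step 2 computes $\E[X_1\mid X_t=t\tilde{x}]$ directly by Bayes' rule over the finite mixture. There the factor $(2\pi\sigma_t^2)^{-d/2}(\det M)^{1/2}$ is common to all components and cancels.

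\textbf{What your route buys.} It is self-contained, and it separates the roles cleanly. The post-map $A_t$ is shown to be independent of the base law, and $M$ enters only through the posterior weights. So there is no matrix Tweedie identity to re-derive and no $M^{-1}M$ cancellation to track. The argument also extends beyond $\hat{q}_S$: replacing the sum by an integral against $q$ gives the population Mahalanobis NW mean.

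\textbf{What the paper's route buys.} It is uniform with the isotropic development: it is literally the proof of Proposition~\ref{prop:velocity} with $M$ inserted. It also keeps the interpretation of the de-scaled marginal as an anisotropic KDE whose score is in NW form, which is the bandwidth-sweep picture the rest of the paper is built on.
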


\begin{proof}
Under the anisotropic base $X_0 \sim \cN(0, M^{-1})$, the marginal at time $t$ is $p_t^{S,M}(x) = \frac{1}{m}\sum_i \cN(x \mid ts_i, \sigma_t^2 M^{-1})$. The Tweedie identity for this model gives $\E[tX_1 \mid X_t = x] = x + \sigma_t^2 M^{-1} \nabla\log p_t^{S,M}(x)$, so the velocity--score identity becomes:
\[
u_t^{S,M}(x) = \frac{x}{t} + \frac{\sigma_t}{t} M^{-1}\nabla\log p_t^{S,M}(x).
\]
De-scaling ($\tilde{x} = x/t$) and differentiating the anisotropic KDE $\tilde{p}_t^{S,M}(\tilde{x}) = \frac{1}{m}\sum_i \phi_{h,M}(\tilde{x} - s_i)$ give:
\[
\nabla\log\tilde{p}_t^{S,M}(\tilde{x}) = \frac{M}{h^2}(m_{h,M}(\tilde{x};S) - \tilde{x}).
\]
Substituting into the velocity--score identity, the $M^{-1}$ cancels the $M$:
\[
u_t^{S,M}(t\tilde{x}) = \tilde{x} + \frac{\sigma_t}{t^2 h^2}(m_{h,M}(\tilde{x};S) - \tilde{x}) = \tilde{x} + \frac{m_{h,M}(\tilde{x};S) - \tilde{x}}{\sigma_t},
\]
which is the same affine post-map $A_t$ as in the isotropic case.
\end{proof}

\paragraph{Low-rank extension.} When $M = L^\top L$ with $L \in \R^{r \times d}$ and $r < d$, the theorem as stated requires $M \succ 0$ and does not directly apply. However, the Mahalanobis distance $\|\tilde{x} - s_i\|_M^2 = \|L(\tilde{x} - s_i)\|^2$ depends only on the $r$-dimensional projections $L\tilde{x}$ and $Ls_i$; adding a small ridge $\epsilon I$ to $M$ recovers positive definiteness while preserving the low-rank kernel structure in the limit $\epsilon \to 0$. The effective estimation dimension is then $r$, not $d$.

\section{Extended experiments}
\label{app:extended}

\subsection{Training details}
\label{app:training}

All models use $\dmodel = 128$, $H = 4$ learned cross-attention heads, 3 transformer layers, and $\smin = 0.01$. Training uses AdamW with learning rate $10^{-4}$ and weight decay $10^{-5}$ for 100k steps. Each training step samples a task $q \sim \Pi$, draws a support set $S \sim q^m$ and a target point $X_1 \sim q$, samples noise $X_0 \sim \cN(0, I_d)$ and time $t \sim \mathrm{Uniform}(0,1]$, and regresses the velocity target $Y_t = X_1 - (1-\smin)X_0$. Enabling the frozen exact head adds a learned projection and widens the output MLP ($+2.6\%$ parameters). Checkpoints are selected by best validation MMD$^2$. All training was conducted on $11\times$ RTX 4090 GPUs; total compute was $\sim\!350$ GPU-hours across all experiments.

\subsection{Collapse and mismatch: additional data (\S\ref{sec:collapse}, \S\ref{sec:mismatch})}
\label{app:collapse}

Figure~\ref{fig:app-collapse-mismatch}(a) shows the ImageNet transition curve: the exact head is marginal at $d \le 16$ and harmful at $d = 64$ ($3/5$ ON seeds diverge vs.\ $0/3$ OFF). Figure~\ref{fig:app-collapse-mismatch}(b) shows the shells comparison at $d = 8, 16, 32$: the learned model outperforms the plug-in by $6$--$11\times$ despite healthy $\neff$.

\begin{figure}[ht!]
  \centering
  \includegraphics[width=\linewidth]{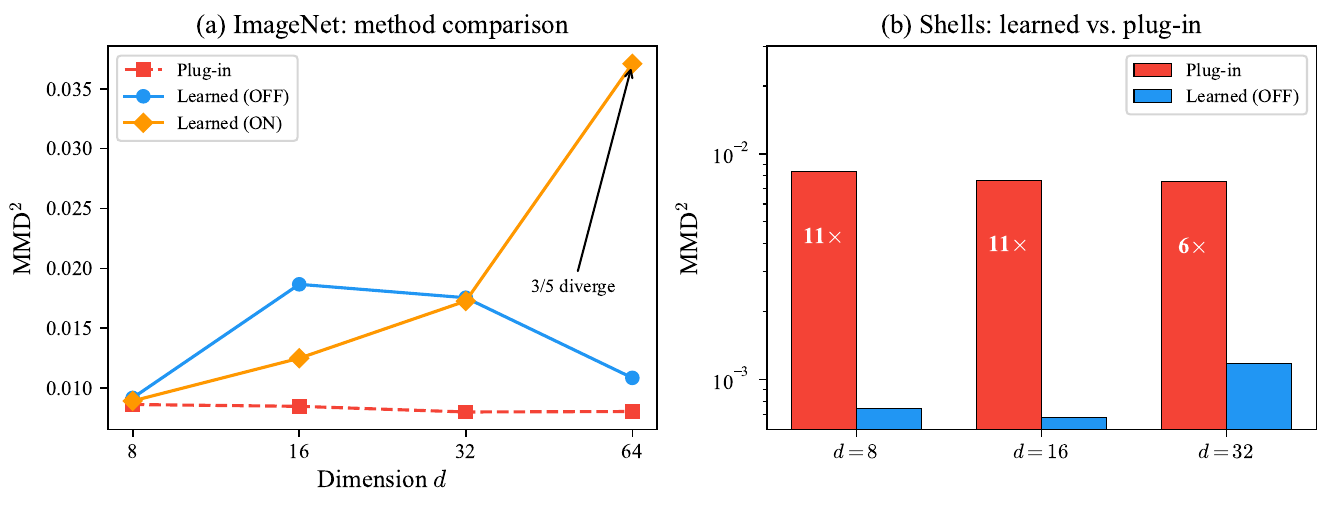}
  \vspace{-14pt}
  \caption{\textbf{(a)}~ImageNet DINOv2+PCA: method comparison across $d$. The exact head (ON) is marginal at $d \le 16$ and destabilizes training at $d=64$. \textbf{(b)}~Spherical shells: the plug-in fails $6$--$11\times$ despite healthy $\neff$, confirming geometry mismatch.}
  \label{fig:app-collapse-mismatch}
\end{figure}

Figure~\ref{fig:app-whitened} shows the whitened ImageNet control: five whitening strengths interpolating from original ($\neff = 12.35$) to fully whitened ($\neff = 1.05$). Plug-in MMD$^2$ is flat ($0.0075$--$0.0077$), confirming that $\neff$ collapse alone does not degrade the plug-in.

\begin{figure}[ht!]
  \centering
  \includegraphics[width=0.8\linewidth]{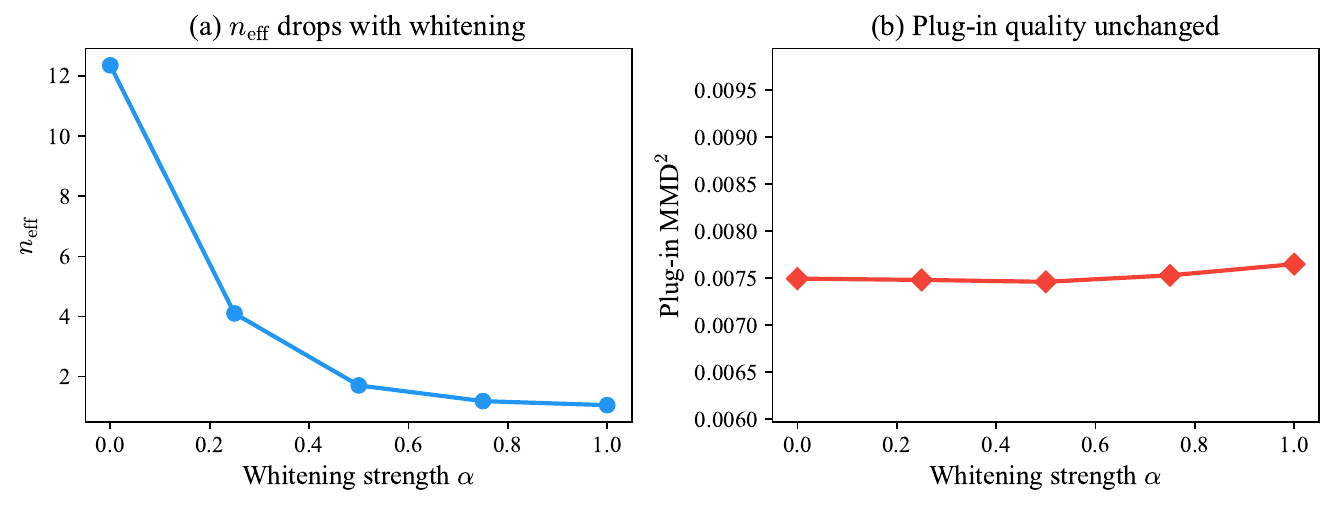}
  \vspace{-4pt}
  \caption{Whitened ImageNet sufficiency test. \textbf{(a)}~$\neff$ drops $12\times$ with whitening. \textbf{(b)}~Plug-in MMD$^2$ is unchanged, confirming $\neff$ alone is not causal.}
  \label{fig:app-whitened}
\end{figure}

\subsection{Support scaling and LOFO (\S\ref{sec:learning})}
\label{app:support}

Figure~\ref{fig:app-support-sweep}(a) shows the $\R^2$ GMM support sweep: MMD$^2 \sim m^{-0.83}$ ($R^2 = 0.97$), with the learned model maintaining $2.4$--$5.5\times$ advantage at all $m$. Figure~\ref{fig:app-support-sweep}(b) shows the shells sweep at $d=8$: $\alpha_{\text{shells}} = 0.669 \approx \alpha_{\text{GMM}} = 0.631$, confirming that the scaling rate is family-independent.

\begin{figure}[b]
  \centering
  \includegraphics[width=\linewidth]{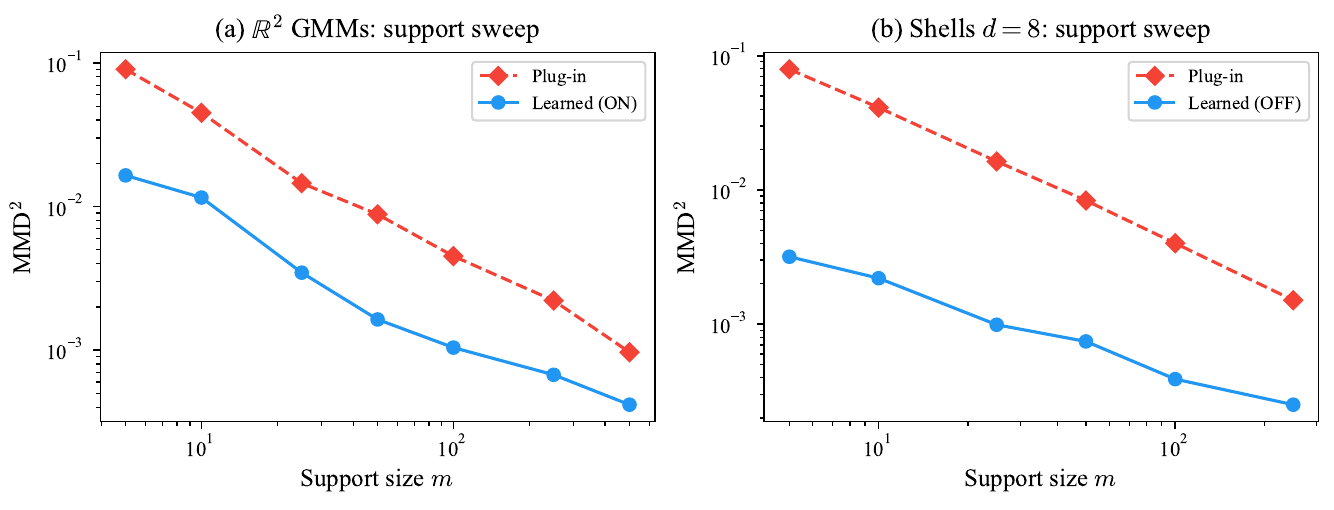}
  \vspace{-14pt}
  \caption{\textbf{(a)}~$\R^2$ GMMs: learned maintains $2.4$--$5.5\times$ advantage at all $m$. \textbf{(b)}~Shells $d=8$: $\alpha_{\text{shells}} \approx \alpha_{\text{GMM}}$; the rate bottleneck is meta-learning, not task complexity.}
  \label{fig:app-support-sweep}
\end{figure}

\paragraph{LOFO generalization.} Leave-one-family-out on $\R^2$: spirals transfer well ($2.3\times$ better than plug-in from $\{\text{GMM, rings, moons}\}$); GMMs fail catastrophically ($37\times$ worse from $\{\text{rings, moons, spirals}\}$). Generalization is structure-dependent. Figure~\ref{fig:app-lofo} shows the full comparison.

\begin{figure}[b]
  \centering
  \includegraphics[width=0.55\linewidth]{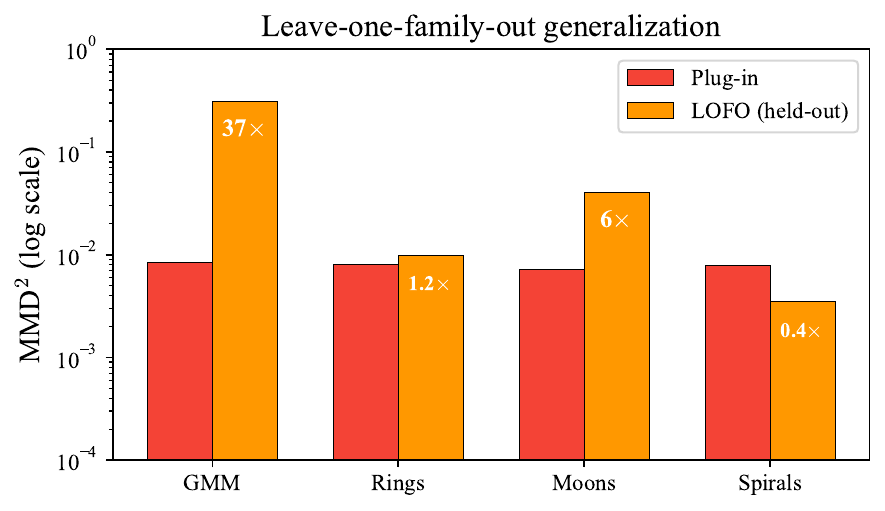}
  \vspace{-4pt}
  \caption{Leave-one-family-out: plug-in MMD$^2$ (red) vs.\ LOFO held-out MMD$^2$ (orange). GMMs held out from curve families: $37\times$ worse. Spirals held out: $0.4\times$ (better than plug-in).}
  \label{fig:app-lofo}
\end{figure}

\subsection{Multi-head analysis}
\label{app:heads}

Figure~\ref{fig:app-attention} shows per-head attention entropy vs.\ flow time at $d=8$, $H=8$. Heads develop distinct entropy profiles (spread 4.58 nats), with some attending sharply (entropy $\approx 0.3$) and others diffusely (entropy $\approx 5.2$). $W_Q$ subspace overlap decreases with $H$, confirming that heads learn nearly orthogonal projections.

\paragraph{QK-norm ablation.} QK-normalization improves $\alpha$ by $+0.05$--$0.08$ at every $H$ and restores monotonic $\alpha$ ordering. However, it compresses $\alpha$ separation to $0.011$ (vs.\ $0.038$ standard).

\begin{figure}[b]
  \centering
  \includegraphics[width=0.7\linewidth]{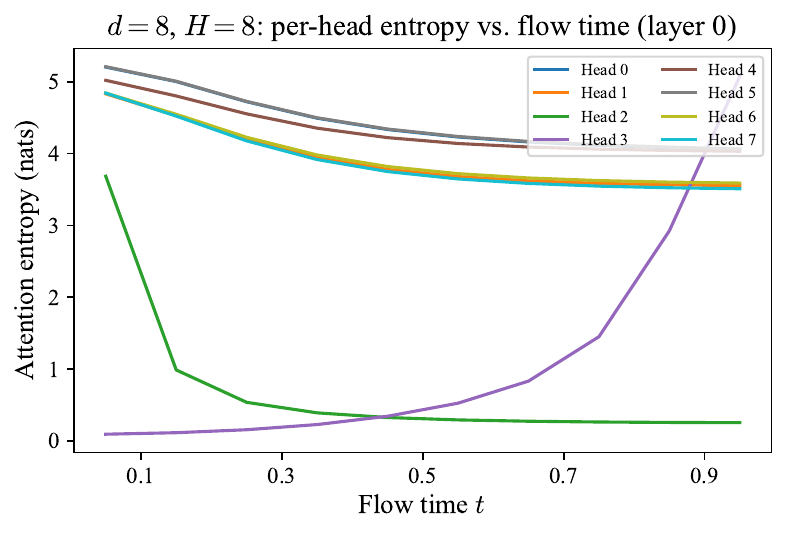}
  \vspace{-4pt}
  \caption{Per-head attention entropy at $d=8$, $H=8$ (layer 0). Heads specialize to different effective bandwidths: some maintain high entropy (broad smoothing) while others collapse to near-zero (sharp, nearest-neighbor-like).}
  \label{fig:app-attention}
\end{figure}

\subsection{Controls and NW variance scaling}
\label{app:controls}
\label{app:nw-rate}

\paragraph{ODE solver.} dopri5 vs.\ Euler-100: plug-in MMD$^2$ improves only $2.4\%$ on $\R^2$ GMMs.

\paragraph{Capacity.} $\dmodel = 256$ gives $<5\%$ change at $d=4$ and $d=64$. Capacity is not the bottleneck.

\paragraph{Time sampling.} Log-uniform bandwidth sampling shows no benefit ($p = 0.67$, 4 seeds).

\paragraph{NW variance scaling.} We verify that the NW variance follows classical theory by measuring $\|m_h^S(\tilde{x}) - m_h^{S_{\text{ref}}}(\tilde{x})\|^2$ directly at the Silverman bandwidth $h^* = m^{-1/(4+d)}$ across $m \in \{10, 25, 50, 100, 250, 500, 1000\}$, with $m_{\text{ref}} = 50{,}000$ as a population proxy.

\vspace{4pt}
\begin{center}
\small
\begin{tabular}{llcccc}
\toprule
Distribution & $d$ & $\alpha_{\text{obs}}$ & $4/(4+d)$ & Ratio & $R^2$ \\
\midrule
GMM-5 & 2 & 1.146 & 0.667 & 1.72$\times$ & 0.982 \\
GMM-5 & 4 & 0.889 & 0.500 & 1.78$\times$ & 0.973 \\
GMM-5 & 8 & 0.534 & 0.333 & 1.60$\times$ & 0.966 \\
\midrule
Fourier & 2 & 1.240 & 0.667 & 1.86$\times$ & 0.999 \\
Fourier & 4 & 0.709 & 0.500 & 1.42$\times$ & 0.999 \\
Fourier & 8 & \textbf{0.320} & \textbf{0.333} & \textbf{0.96$\times$} & 0.998 \\
\bottomrule
\end{tabular}
\end{center}
\vspace{-4pt}
The Fourier density at $d = 8$ matches the theoretical $O(m^{-4/(4+d)})$ rate to within $4\%$; GMMs exceed the theoretical rate because they are parametric.

\subsection{IP-Adapter extended analysis and sample visualizations}
\label{app:samples}

Figure~\ref{fig:app-ipadapter-null} extends the IP-Adapter analysis from \S\ref{sec:adapters} with null-model comparisons. The observed $\rho = 0.885$ far exceeds the permutation null ($0.0$) and approaches the same-NN ceiling ($1.0$). The distribution is concentrated: $56\%$ of heads exceed $\rho = 0.9$ at the final timestep, and the correlation increases monotonically with denoising.

\begin{figure}[ht!]
  \centering
  \includegraphics[width=\linewidth]{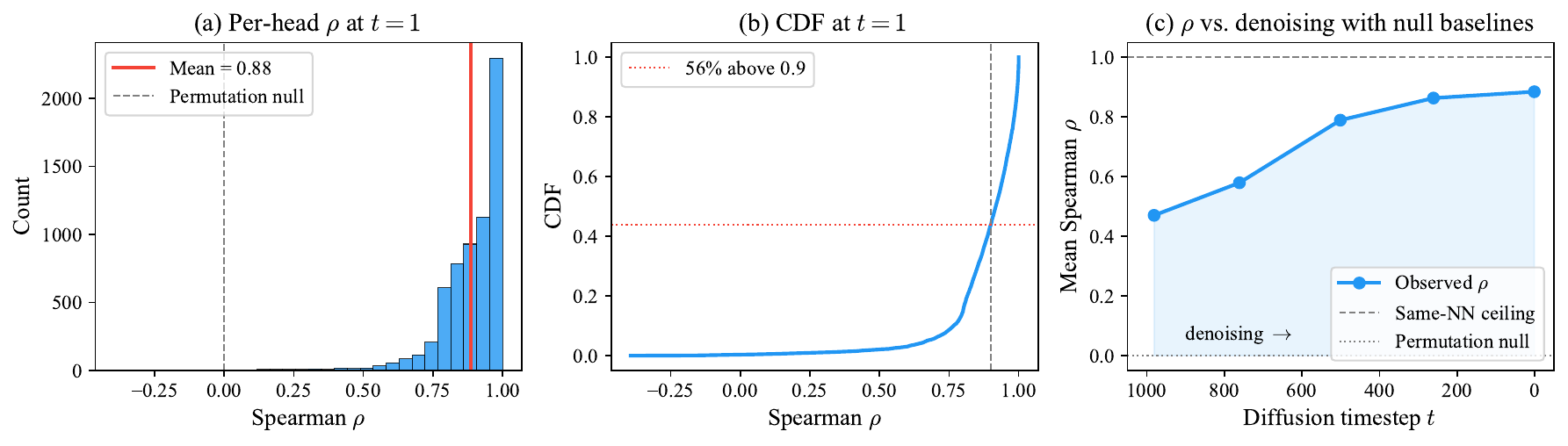}
  \vspace{-14pt}
  \caption{IP-Adapter null-model analysis (SD 1.5, 50 images, 128 heads). \textbf{(a)}~Per-head $\rho$ at $t=1$: mean $0.885$ far exceeds permutation null. \textbf{(b)}~CDF: $56\%$ of heads above $\rho = 0.9$. \textbf{(c)}~$\rho$ vs.\ denoising timestep between permutation null and same-NN ceiling.}
  \label{fig:app-ipadapter-null}
\end{figure}

\begin{figure}[ht!]
  \centering
  \includegraphics[width=\linewidth]{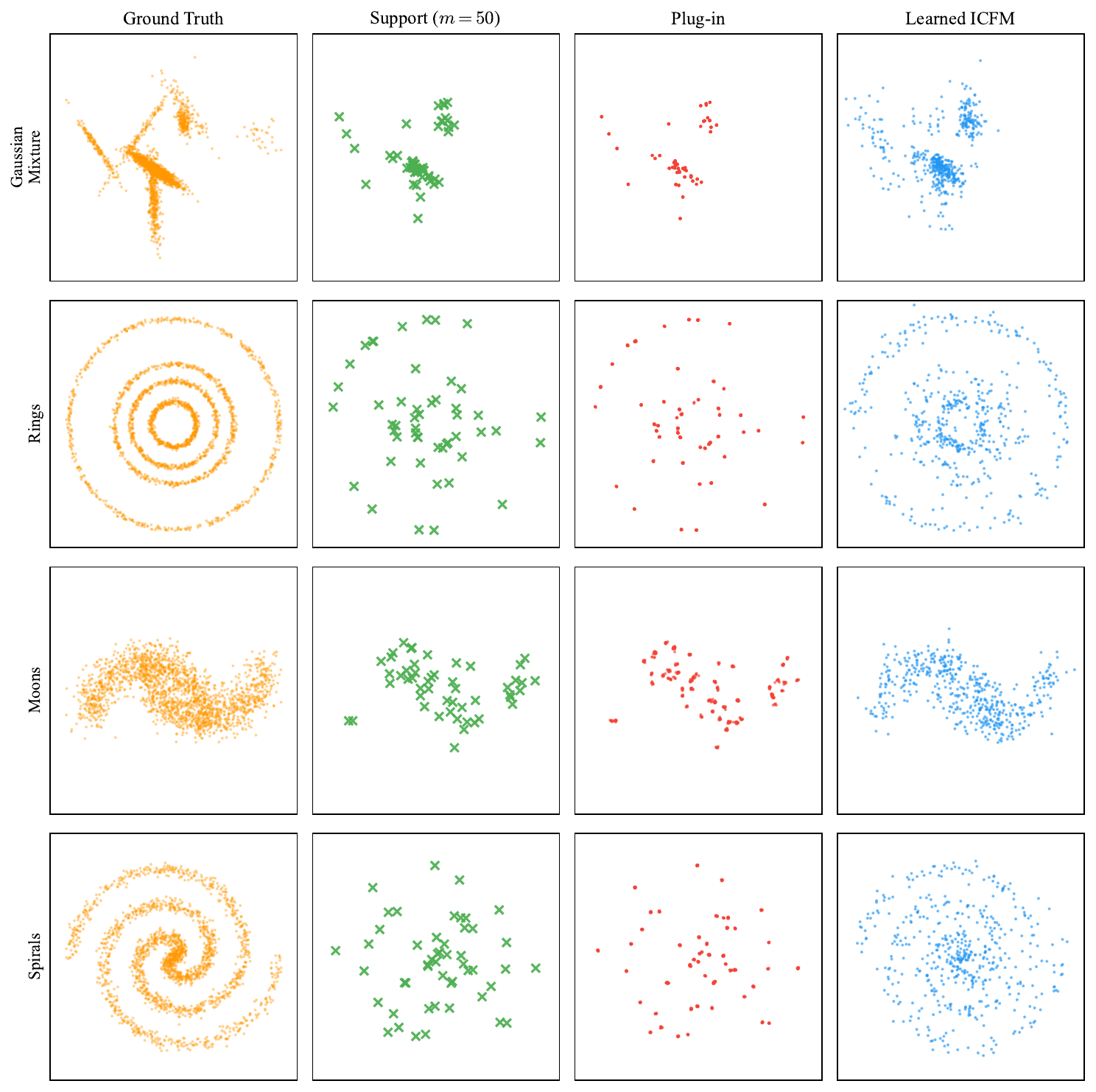}
  \vspace{-4pt}
  \caption{Support-conditioned generation on four $\R^2$ families ($m=50$). Columns: ground truth, support set, plug-in, learned ICFM. The plug-in places mass near support points; the learned model generates fresh samples matching the ground-truth density.}
  \label{fig:app-samples}
\end{figure}


\end{document}